\documentclass[11pt,letter]{article}

\usepackage[margin=1in]{geometry}
\usepackage{amssymb,amsthm,amsmath,amssymb,wrapfig,dsfont}
\usepackage[dvipsnames]{xcolor}

\usepackage{times}
\usepackage{mathtools}
\usepackage{dsfont} 
\usepackage{amssymb}
\usepackage{graphicx}
\usepackage{url}
\usepackage{setspace}
\usepackage{algorithm,algorithmic}
\usepackage{hyperref}
\usepackage{framed}
\usepackage{xcolor}
\usepackage{soul}
\usepackage{natbib}
\usepackage[autostyle]{csquotes}  
\usepackage{verbatim}
\usepackage{enumitem}
\usepackage{varwidth}
\usepackage{graphicx}
\usepackage{subcaption}
\usepackage{wrapfig}
\usepackage{breakcites}
\usepackage{natbib}

\usepackage[algo2e,ruled]{algorithm2e}

\usepackage{hyperref}
\hypersetup{
     colorlinks   = true,
     urlcolor    = teal,
	 citecolor = teal,
	 linkcolor = teal,
	 breaklinks=true 
}

\frenchspacing

\mathchardef\dash="2D
\newtheorem{theorem}{Theorem}[section]
\newtheorem{lemma}[theorem]{Lemma}


\renewcommand{\O}{\mathcal{O}}
\newcommand{\F}{\mathcal{F}}

\DeclareMathOperator{\E}{\mathbb{E}}


\newcommand{\poly}{\mathrm{poly}}
\newcommand{\maj}{\mathrm{Maj}}
\newcommand{\smaj}{\mathrm{Maj{\dash}size}}
\newcommand{\err}{\mathrm{err}}

\newcommand{\Maj}{\mathit{MAJ}}

\renewcommand{\bar}[1]{\overline{#1}}

\usepackage{enumitem}
\setenumerate{noitemsep,topsep=0pt,parsep=0pt,partopsep=0pt}
\setitemize{noitemsep,topsep=0pt,parsep=0pt,partopsep=0pt}

\newcommand{\load}{\lambda}
\newcommand{\query}{\Lambda}
\newcommand{\golden}{\Gamma}
\renewcommand{\ell}{g}

\newcommand{\D}{D}
\newcommand{\DX}{{D_{\mid\X}}}
\newcommand{\X}{\mathcal{X}}
\newcommand{\Y}{\mathcal{Y}}


\title{Efficient PAC Learning from the Crowd}

\author{Pranjal Awasthi\thanks{Rutgers University, \texttt{pranjal.awasthi@rutgers.edu}} \and
Avrim Blum\thanks{Carnegie Mellon University, \texttt{avrim@cs.cmu.edu}. Supported in part by NSF grants CCF-1525971 and CCF-1535967.  This work was done in part while the author was visiting the Simons Institute for the Theory of Computing.} \and
Nika Haghtalab\thanks{Carnegie Mellon University, \texttt{nhaghtal@cs.cmu.edu}. Supported in part by NSF grants CCF-1525971 and CCF-1535967 and a Microsoft Research Ph.D. Fellowship.  This work was done in part while the author was visiting the Simons Institute for the Theory of Computing.} \and 
Yishay Mansour\thanks{Blavatnik School of Computer Science, Tel-Aviv University, \texttt{mansour@tau.ac.il}. This work was done while the author was at Microsoft Research, Herzliya.
Supported in part by a grant from the Science Foundation (ISF), by a grant from United States-Israel Binational Science Foundation (BSF), and by The Israeli Centers of Research Excellence (I-CORE) program (Center No. 4/11).}}

\date{}

\begin{document}

\maketitle

\allowdisplaybreaks

\begin{abstract}
In recent years crowdsourcing has become the method of choice for gathering labeled training data for learning algorithms. Standard approaches to crowdsourcing view the process of acquiring labeled data separately from the process of learning a classifier from the gathered data. This can give rise to computational and statistical challenges. For example, in most cases there are no known computationally efficient learning algorithms that are robust to the high level of noise that exists in crowdsourced data, and efforts to eliminate noise through voting often require a large number of queries per example.

In this paper, we show
how by interleaving the process of labeling and learning, we can
attain computational efficiency with much less overhead in the labeling cost.
In particular, we consider the {\em realizable} setting where there exists a true target function in $\F$ and consider a pool of labelers.
When a noticeable fraction of the labelers are \emph{perfect}, and the rest  behave arbitrarily, we show that any $\F$ that can be efficiently learned in the traditional {\em realizable} PAC model can be learned in a computationally efficient manner by querying the crowd, despite high amounts of noise in the responses. Moreover, we show that this can be done while each labeler only labels a constant number of examples and the number of labels requested per example, on average, is a constant.
When no perfect labelers exist, a related task is to find a set of the labelers which are \emph{good} but not perfect.
We show that we can identify  all good labelers, when at least the majority of labelers are good.

\end{abstract}

\section{Introduction}
Over the last decade, research in machine learning and AI has seen tremendous growth, partly due to the ease with which we can collect and annotate massive amounts of data across various domains. This rate of data annotation has been made possible due to crowdsourcing tools, such as {Amazon Mechanical Turk\texttrademark}, that facilitate individuals' participation  in a labeling task.
In the context of classification, a crowdsourced model uses a large pool of workers to gather labels for a given training data set  that will be used for the purpose of learning a good classifier.
Such learning environments that involve the crowd give rise to a multitude of design choices that do not appear in traditional learning environments.
These include:
How does the goal of learning from the crowd differs from the goal of annotating data by the crowd?
What challenges does the high amount of noise typically found in curated data sets~\citep{wais2010towards, kittur2008crowdsourcing, ipeirotis2010quality} pose to the learning algorithms? How do learning and labeling processes interplay? How many labels are we willing to take per example?
And, how much load can a labeler handle?

In recent years, there have been many exciting works addressing various theoretical aspects of these and other questions~\citep{slivkins2014online}, such as reducing noise in crowdsourced data~\citep{dekel2009vox}, task assignment~\citep{BKS13, tran2014efficient} in online or offline settings~\citep{karger2014budget}, and the role of incentives~\citep{vaughan2013adaptive}. In this paper we focus on one such aspect, namely, {\em how to efficiently learn and generalize from the crowd with minimal cost?}
The standard approach is to view the process of acquiring labeled data through crowdsourcing and the process of learning a classifier in isolation.
In other words, a typical learning process involves collecting data labeled by many labelers via a crowdsourcing platform followed by running a passive learning algorithm to extract a good hypothesis from the labeled data. 
As a result, approaches to crowdsourcing focus on getting high quality labels per example and not so much on the task further down in the pipeline. Naive techniques such as taking majority votes to obtain almost perfect labels have a cost per labeled example that scales with the data size, namely $\log (\frac m \delta)$ queries per label where $m$ is the training data size and $\delta$ is the desired failure probability. This is undesirable in many scenarios when data size is large. Furthermore, if only a small fraction of the labelers in the crowd are perfect, such approaches will inevitably fail. 
An alternative is to feed the noisy labeled data to existing passive learning algorithms. However, we currently lack computationally efficient PAC learning algorithms that are provably robust to high amounts of noise that exists in crowdsourced data. Hence separating the learning process from the data annotation process results in high labeling costs or suboptimal learning algorithms.

In light of the above, we initiate the study of designing efficient PAC learning algorithms in a crowdsourced setting where learning and acquiring labels are done in tandem.
We consider a natural model 
of crowdsourcing and ask the fundamental question of whether efficient learning with little overhead in labeling cost is possible in this scenario. We focus on the classical PAC setting of~\cite{Valiant:84} 
where there exists a true target classifier $f^*\in \F$ and the goal is to learn $\F$ from a finite training set generated from the underlying distribution.
 We assume that one has access to a large pool of labelers that can provide (noisy) labels for the training set. 
We seek algorithms that run in polynomial time and produce a hypothesis with small error. 
We are especially interested in settings where there are computationally efficient algorithms for learning $\F$ in the consistency model, i.e. the realizable PAC setting.
Additionally, we also want our algorithms to make as few label queries as possible, ideally requesting a total number of labels that is within a constant factor of the amount of labeled data needed in the realizable PAC setting. We call this $O(1)$ \emph{overhead} or \emph{cost per labeled example}.
Furthermore, in a realistic scenario each labeler can only provide labels for a constant number of examples, hence we cannot ask too many queries to a single labeler. We call the number of queries asked to a particular labeler  the {\em load} of that labeler.

Perhaps surprisingly, we show that when a noticeable fraction of the labelers in our pool are \emph{perfect} all of the above objectives can be achieved simultaneously. That is, {\em if $\F$ can be efficiently PAC learned in the realizable PAC model, then it can be efficiently PAC learned in the noisy crowdsourcing model with a constant cost per labeled example}. In other words,
{the ratio of the number of label requests in the noisy crowdsourcing model to the number of labeled examples needed in the traditional PAC model with a perfect labeler is a constant and does not increase with the size of the data set.}
Additionally, each labeler is asked to label only a constant number of examples, {i.e., $O(1)$ load per labeler.} Our results also answer an open question of~\cite{dekel2009vox} regarding the possibility of efficient noise robust PAC learning by performing labeling and learning simultaneously. 
When no perfect labelers exist, a related task is to find a set of the labelers which are \emph{good} but not perfect.
We show that we can identify the set of all good labelers, when at least the majority of labelers are good.

\subsection{Overview of Results}
We study various versions of the model described above. In the most basic setting we assume that a large percentage, say $70\%$ of the labelers are \emph{perfect}, i.e., they always label according to the target function $f^*$. The remaining $30\%$ of the labelers could behave arbitrarily and we make no assumptions on them.
Since the perfect labelers are in strong majority, a straightforward approach is to label each example with the majority vote over a few randomly chosen labelers, {to produce the correct label on every instance with high probability.
 However, such an approach leads to a query bound of $O(\log\frac m\delta)$ per labeled example, where $m$ is the size of the training set and $\delta$ is the acceptable probability of failure.} In other words, the cost per labeled example is $O(\log\frac m\delta)$ and scales with the size of the data set.  Another easy approach is to pick a few labelers at random and ask them to label all the examples. Here, the cost per labeled example is a constant but the approach is infeasible in a crowdsourcing environment since it requires a single or a constant number of labelers to label the entire data set.
{Yet another approach is to label each example with the majority vote of $O(\log\frac 1\epsilon)$ labelers. While the labeled sample set created in this way only has error of $\epsilon$, it is still unsuitable for being used with PAC learning algorithms as they are not robust to even small amounts of noise, if the noise is heterogeneous. So, the computational challenges still persist.}
Nevertheless, we introduce an algorithm that performs \emph{efficient} learning with \emph{$O(1)$ cost per labeled example} and \emph{$O(1)$ load  per labeler.}

\medskip

\noindent \textbf{Theorem~\ref{thm:main-manyPerfect-informal}~(Informal)}~\emph{
Let $\F$ be a hypothesis class that can be PAC learned in polynomial time to $\epsilon$ error with probability $1-\delta$ using    $m_{\epsilon, \delta}$ samples. Then $\F$ can be learned in polynomial time using $O(m_{\epsilon, \delta})$ samples in a crowdsourced setting with $O(1)$ cost per labeled example, provided a  $\frac 12 + \Theta(1)$ fraction of the labelers are perfect. Furthermore, every labeler is asked to label only $1$ example.
}
\medskip

Notice that the above theorem immediately implies that each example is queried only $O(1)$ times on average as opposed to the data size dependent $O(\log(\frac m \delta))$ cost incurred by the naive majority vote style procedures.
We next extend our result to the setting where the fraction of perfect labelers is significant but might be less than $\frac 1 2$, say $0.4$. Here we again show that $\F$ can be efficiently PAC learned using $O(m_{\epsilon, \delta})$ queries provided we have access to an ``expert'' that can correctly label  a constant number of examples. We call such queries that are made to an expert {\em golden queries}. 
When the fraction of perfect labelers is close to $\frac 12$, say $0.4$, we show that {\em just one} golden query is enough to learn.
More generally, when the fraction of the perfect labelers is some $\alpha$, we show that $O(1 / \alpha)$ golden queries is sufficient to learn a classifier efficiently. We describe our results in terms of  $\alpha$, but we are particularly interested in regimes where $\alpha = \Theta(1)$.

\medskip

\noindent \textbf{Theorem~\ref{thm:main-halfPerfect-informal}~(Informal)}~\emph{
Let $\F$ be a hypothesis class that can be PAC learned in polynomial time to $\epsilon$ error with probability $1-\delta$ using $m_{\epsilon, \delta}$ samples. Then $\F$ can be learned in polynomial time using $O(m_{\epsilon, \delta})$ samples in a crowdsourced setting with $O(\frac 1 {\alpha})$ cost per labeled example, provided more than an $\alpha$ fraction of the labelers are perfect for some constant $\alpha > 0$. Furthermore, every labeler is asked to label only $O(\frac 1 {\alpha})$ examples and the algorithm uses at most $\frac 2 {\alpha}$ golden queries.
}
\medskip

The above two theorems highlight the importance of incorporating the structure of the crowd in algorithm design. Being oblivious to the labelers will result in noise models that are notoriously hard. For instance, if one were to assume that each example is labeled by a single random labeler drawn from the crowd, one would recover the {\em Malicious Misclassification Noise} of~\cite{rivest1994formal}.
Getting computationally efficient learning algorithms even for very simple hypothesis classes
has been a long standing open problem in this space.
Our results highlight  that by incorporating the structure of the crowd, one can efficiently learn \emph{any hypothesis class} with a small overhead.

Finally, we study the scenario when none of the labelers are perfect. Here we assume that the majority of the labelers are ``good'', that is they provide labels according to functions that are all $\epsilon$-close to the target function. In this scenario generating a hypothesis of low error is as hard as agnostic learning\footnote{This can happen for instance when all the labelers label according to a single function $f$ that is $\epsilon$-far from $f^*$.}. Nonetheless, we show that one can detect all of the good labelers using expected $O(\frac 1 {\epsilon}\log(n))$ queries per labeler, where $n$ is the target number of labelers desired in the pool. 

\medskip
\noindent \textbf{Theorem~\ref{thm:main-noPerfect-informal}~(Informal)}~\emph{
Assume we have a target set of $n$ labelers that are partitioned into two sets, {\em good} and {\em bad}. 
Furthermore, assume that there are at least $\frac n 2$ good labelers who  always provide labels according to functions that are $\epsilon$-close to a target function $f^*$. The set of bad labelers always provide labels according to functions that are at least $4 \epsilon$ away from the target. Then there is a polynomial time algorithm that identifies, with probability at least $1-\delta$, all the good labelers and none of the bad labelers using expected $O(\frac 1 {\epsilon} \log (\frac n \delta))$ queries per labeler.}
\medskip

\subsection{Related Work}
Crowdsourcing has received significant attention in the machine learning community. As mentioned in the introduction, crowdsourcing platforms require one to address several questions that are not present in traditional modes of learning.

The work of~\cite{dekel2009vox} shows how to use crowdsourcing to reduce the noise in a training set before feeding it to a learning algorithm. Our results answer an open question in their work by showing that performing data labeling and learning in tandem can lead to significant benefits.

A large body of work in crowdsourcing  has focused on the problem of {\em task assignment}. Here, workers arrive in an online fashion and a requester has to choose to assign specific tasks to specific workers. Additionally, workers might have different abilities and might charge differently for the same task. The goal from the requester's point of view is to finish multiple tasks within a given budget while maintaining a certain minimum quality~\citep{vaughan2013adaptive, tran2014efficient}. There is also significant work on {\em dynamic procurement} where the focus is on assigning prices to the given tasks so as to provide incentive to the crowd to perform as many of them as possible within a given budget~\citep{BKS12, BKS13, singla2013truthful}. Unlike our setting, the goal in these works is not to obtain a generalization guarantee or learn a function, but  rather to complete as many tasks as possible within the budget.

The work of~\cite{karger2011iterative, karger2014budget} also studies the problem of task assignment
in offline and online settings. 
In the offline setting, the authors provide an algorithm based on
belief propagation that infers the correct answers for each task by pooling together the answers from each worker. They show that their approach performs better than simply taking majority votes. Unlike our setting, their goal is to get an approximately correct set of answers for the given data set and not to generalize from the answers. Furthermore, their model assumes that each labeler makes an error at random independently with a certain probability. We, on the other hand, make no assumptions on the nature of the {\em bad} labelers.

Another related model is the recent work of~\cite{steinhardt2016avoiding}. Here the authors look at the problem of extracting top rated items by a group of labelers among whom a constant fraction are consistent with the {\em true} ratings of the items. The authors use ideas from matrix completion to design an algorithm that can recover the top rated items with an $\epsilon$ fraction of the noise provided every labeler rates $\sim \frac 1 {\epsilon^4}$ items and one has access to $\sim \frac 1 {\epsilon^2}$ ratings from a trusted  expert. Their model is incomparable to ours since their goal is to recover the top rated items and not to learn a hypothesis that generalizes to a test set. 

Our results also shed insights into the notorious problem of PAC learning with noise. Despite decades of research into PAC learning, noise tolerant polynomial time learning algorithms remain elusive. 
There has been substantial work on PAC learning under realistic noise models such as the Massart noise or the Tsybakov noise models~\citep{bbl05}. However, computationally efficient algorithms for such models are known in very restricted cases~\citep{ABHU15,awasthi2016learning}. In contrast, we show that by using the structure of the crowd, one can indeed design polynomial time PAC learning algorithms even when the noise is of the type mentioned above. 

More generally, interactive models of learning have been studied in the machine learning community~\citep{Cohn94, Dasgupta05, BBL06, Kol10, hanneke:11,zhang2015active,yan2016active}.
We describe some of these works in Appendix~\ref{app:related}.

\section{Model and Notations}
 
Let $\X$ be an instance space and $\Y = \{+1, -1\}$ be the set of possible labels.
A \emph{hypothesis} is a function $f: \X \rightarrow \Y$ that maps an instance $x\in \X$ to its classification $y$. 
We consider the \emph{realizable} setting where there is a distribution  over $\X \times \Y$ and a true target function in  hypothesis class $\F$. 
More formally, we consider a distribution $\D$ over $\X\times \Y$ and an unknown hypothesis $f^*\in \F$, where $\err_D(f^*) = 0$.
We denote the marginal of $\D$ over $\X$ by $\DX$. 
The \emph{error} of a hypothesis $f$ with respect to distribution $\D$ is defined as 
$\err_\D(f) = \Pr_{(x,f^*(x))\sim \D} [f(x) \neq f^*(x)]$.

In order to achieve our goal of learning $f^*$ well with respect to distribution $D$, we consider having access to a large pool of labelers, some of whom label according to $f^*$ and some who do not.
Formally, labeler $i$ is defined by its corresponding classification function $\ell_i: \X \rightarrow \Y$.
We say that $\ell_i$ is \emph{perfect} if $\err_D(\ell_i) = 0$.
We consider a distribution $P$ that is uniform over all labelers and let $\alpha = \Pr_{i\sim P}[ \err_D(\ell_i) = 0]$  be the fraction of perfect labelers.
We allow an algorithm to query labelers on  instances drawn from $\DX$.
Our goal is to design learning algorithms that efficiently learn a low error classifier while maintaining a small overhead in the number of labels.
We compare the computational and statistical aspects of our algorithms to their PAC counterparts in the realizable setting. 

In the traditional PAC setting with a realizable distribution,  $m_{\epsilon, \delta}$ denotes the number of samples needed for  learning $\F$.
That is,  $m_{\epsilon, \delta}$ is  the total number of labeled samples drawn from the realizable distribution $\D$ needed to output a classifier $f$ that has $\err_\D(f) \leq \epsilon$, with probability $1-\delta$.
We know from the VC theory~\citep{AB99}, that for a hypothesis class $\F$ with VC-dimension $d$ and no additional assumptions on $\F$, 
$m_{\epsilon, \delta} \in O\left( \epsilon^{-1} \left( d \ln\left( \frac 1\epsilon \right) + \ln\left( \frac 1\delta\right) \right)  \right).$
Furthermore, we assume that efficient algorithms for the realizable setting exist. 
That is, we consider an oracle  $\O_\F$
that for a set of labeled instances $S$, returns a function $f\in \F$ that is consistent with the labels in $S$, if one such function exists, and outputs ``None'' otherwise.

Given an algorithm in the noisy crowd-sourcing setting,
we define the \emph{average cost per labeled example} of the algorithm, denoted by $\query$, to be the ratio of the number of label queries made by the algorithm to the number of labeled examples needed in the traditional realizable PAC model, $m_{\epsilon, \delta}$.
The \emph{load} of an algorithm,  denoted by $\lambda$, is the \emph{maximum number of label queries that have to be answered by an individual labeler.} In other words, $\lambda$ is the maximum number of labels queried from one labeler, when $P$ has an infinitely large support.
\footnote{The concepts of \emph{total number of queries} and  \emph{load} may be seen as analogous to \emph{work} and \emph{depth} in parallel algorithms, where \emph{work} is the total number of operations performed by an algorithm and \emph{depth} is the maximum number of operations that one processor has to perform in a system with infinitely many processors.}
When the number of labelers is fixed, such as in Section~\ref{sec:detect}, we define the
\emph{load} to simply be the number of queries answered by a single labeler.
Moreover, we allow an algorithm to directly query the target hypothesis $f^*$ on a few, e.g., $O(1)$,  instances drawn from  $\DX$. 
We call these ``golden queries'' and denote their  total number by $\golden$.

Given a set of labelers $L$ and an instance $x\in \X$, we define $\maj_L(x)$ to be the label assigned to $x$ by the majority of labelers in $L$. Moreover, we denote by $\smaj_L(x)$ the fraction of the labelers in $L$  that agree with the label $\maj_L(x)$. Given a set of classifiers $H$, we denote by $\Maj(H)$ the classifier that for each $x$ returns prediction $\maj_H(x)$.
Given a distribution $P$ over labelers and a set of labeled examples $S$, we denote by $P_{\mid S}$
the distribution $P$ conditioned on labelers that agree with labeled samples $(x,y) \in S$. 
We consider $S$ to be small, typically of size $O(1)$.
Note that we can draw a labeler from  $P_{\mid S}$ by first drawing a labeler according to $P$ and querying it on all the labeled instances in $S$. Therefore, when $P$ has infinitely large support, the load of an algorithm is the maximum size of $S$ that $P$ is ever conditioned on.

%
%
%
%
\section{A Baseline Algorithm and a Road-map for Improvement}  \label{sec:roadmap}

In this section, we briefly describe a simple algorithm and the approach we use to improve over it.
Consider a very simple baseline algorithm for the case of $\alpha > \frac 12$:
\begin{quote}
\textbf{\textsc{Baseline}}:~ Draw a sample of size $m =m_{\epsilon, \delta}$ from $\DX$ and label each $x\in S$ by $\maj_L(x)$, where $L \sim P^k$ for $k= O\left((\alpha - 0.5)^{-2} \ln\left(\frac{m}{\delta} \right) \right)$ is a set of randomly drawn labelers. Return classifier  $\O_\F(S)$.
\end{quote}
That is, the baseline algorithm queries enough labelers on each sample such that with probability $1- \delta$ all the labels are correct. Then, it learns a classifier using this labeled set.
It is clear that the performance of \textsc{Baseline} is far from being desirable. 
First, this approach takes $\log(m/\delta)$ more labels than it requires samples, leading to
an average cost per labeled example that increases with the size of the sample set.
Moreover, when perfect labelers form a small majority of the labelers, i.e.,   $\alpha = \frac 12 + o(1)$, the number of labels needed to correctly label an instance increases drastically.
Perhaps even more troubling is that if the perfect labelers are in minority, i.e., $\alpha < \frac 12$, $S$ may be mislabeled and $\O_\F(S)$ may return a classifier that has large error, or no classifier at all.
In this work, we improve over \textsc{Baseline} in both aspects.

In Section~\ref{sec:boost}, we 
improve the $\log(m/\delta)$ average cost per labeled example by interleaving the two processes responsible for learning a classifier and querying labels.
In particular, \textsc{Baseline} first finds \emph{high quality labels}, i.e., labels that are correct with high probability, and then learns a classifier that is consistent with those labeled samples.
However, interleaving the process of learning and acquiring high quality labels can make both processes more efficient.
At a high level, for a given classifier $h$ that has a larger than desirable error, one may be able to find regions where $h$ performs particularly poorly. That is, the classifications provided by $h$ may differ from the correct label of the  instances. In turn, 
by focusing our effort for getting high quality labels on these regions we can output a correctly labeled sample set using less label queries overall. 
These additional correctly labeled instances from regions where $h$ performs poorly can help us improve the error rate of $h$ in return. In Section~\ref{sec:boost}, we introduce an
algorithm that draws on ideas from boosting and a probabilistic filtering approach that we develop in this work to facilitate interactions between learning and querying.

In Section~\ref{sec:boost-alpha}, we remove the dependence of label complexity on $(\alpha - 0.5)^{-2}$ using $O(1/\alpha)$ golden queries. At a high level, instances where only a small majority of labelers agree are difficult to label using queries asked from labelers. But, these instances are great test cases that help us identify a large fraction of imperfect labelers. That is, we can first ask a golden query on one such instance to get its correct label and from then on only consider labelers that got this label correctly. In other words, we first test the labelers on one or very few tests questions, if they pass the tests, then  we ask them real label queries for the remainder of the algorithm, if not, we never consider them again.

\section{An Interleaving Algorithm}  \label{sec:boost}

In this section, we improve over the average cost per labeled example of the \textsc{Baseline} algorithm, by interleaving the process of learning and acquiring high quality labels. 
Our Algorithm~\ref{alg:boost} facilitates the interactions between the learning process and the querying process using ideas from classical PAC learning and adaptive techniques we develop in this work. 
For ease of presentation, we first consider the case where $\alpha = \frac 12 + \Theta(1)$, say $\alpha \geq 0.7$, and introduce an algorithm and techniques that work in this regime. In Section~\ref{sec:boost-alpha}, we  show how our algorithm can be modified to work with any value of $\alpha$.
For convenience, we assume in the analysis below that distribution $\D$ is over a discrete space. This is in fact without loss of generality, since using uniform convergence one can instead work with the uniform distribution over an unlabeled sample multiset of size $O(\frac {d}{\epsilon^2})$ drawn  from $\D_{|\X}$.

Here, we provide an overview of the techniques and ideas used in this algorithm.
\paragraph{Boosting:}
In general, boosting algorithms~\citep{schapire1990strength,freund1990boosting,freund1995desicion} provide a mechanism for producing a classifier of error $\epsilon$ using learning algorithms that are only capable of producing classifiers with considerably larger error rates, typically of error $p = \frac 12-\gamma$ for small $\gamma$. 
In particular,  early work of \cite{schapire1990strength} in this space shows  how one can combine $3$ classifiers of error $p$ to get a classifier of error $O(p^2)$, for any $p>0$.

\begin{theorem}[\cite{schapire1990strength}] \label{thm:schapire}
For any $p>0$ and distribution $\D$, consider three classifiers:
1) classifier $h_1$ such that $\err_\D(h_1)\leq p$;
2) classifier $h_2$ such that $\err_{D_2}(h_2) \leq p$, where $\D_2 = \frac 12 D_C + \frac 12 D_I$ for distributions $\D_C$ and $D_I$ that denote  distribution $\D$  conditioned on $\{x\mid h_1(x)= f^*(x)\}$ and $\{x\mid h_1(x)\neq f^*(x)\}$, respectively;
3) classifier $h_3$ such that $\err_{D_3}(h_3) \leq p$, where $\D_3$ is $\D$  conditioned on $\{x\mid h_1(x) \neq h_2(x)\}$.
Then, $\err_D(\Maj(h_1, h_2, h_3)) \leq 3p^2 - 2p^3$.
\end{theorem}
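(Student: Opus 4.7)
The plan is to bound the error of $\Maj(h_1,h_2,h_3)$ by a direct case analysis on the agreement pattern of $h_1$ and $h_2$, then invoke the error guarantees for $h_2$ on $D_2$ and for $h_3$ on $D_3$. First I would observe that because the label space is $\{+1,-1\}$, whenever $h_1(x) = h_2(x)$ the three-vote majority outputs this common value, and otherwise it outputs $h_3(x)$. Consequently $\Maj(h_1,h_2,h_3)$ errs on exactly the union of two disjoint events: (a)~both $h_1$ and $h_2$ err (in which case they necessarily agree, since the only wrong label is $-f^*(x)$), or (b)~$h_1(x) \neq h_2(x)$ and $h_3$ errs. This gives the decomposition
\begin{align*}
\err_D(\Maj(h_1,h_2,h_3)) = \Pr_D[h_1 \text{ and } h_2 \text{ both err}] + \Pr_D[h_1 \neq h_2 \text{ and } h_3 \text{ errs}].
\end{align*}

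Next I would introduce the shorthand $q = \err_D(h_1) \leq p$, $a = \err_{D_C}(h_2)$, and $b = \err_{D_I}(h_2)$. Since $D_C$ and $D_I$ are $D$ conditioned on $\{h_1 = f^*\}$ and $\{h_1 \neq f^*\}$ respectively, the assumption $\err_{D_2}(h_2) \leq p$ becomes $\tfrac{1}{2}(a+b) \leq p$, i.e., $a + b \leq 2p$. The first probability in the decomposition is simply $qb$. For the second, splitting on whether $h_1$ is correct gives $\Pr_D[h_1 \neq h_2] = (1-q)a + q(1-b)$, and since $D_3$ is $D$ conditioned on $\{h_1 \neq h_2\}$, the assumption $\err_{D_3}(h_3) \leq p$ yields
\begin{align*}
\Pr_D[h_1 \neq h_2 \text{ and } h_3 \text{ errs}] \leq p\bigl[(1-q)a + q(1-b)\bigr].
\end{align*}
Adding the two pieces and expanding, $\err_D(\Maj) \leq qb(1-p) + pa(1-q) + pq$.

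To finish, I would maximize the right-hand side over the feasible polytope $\{q \in [0,p],\ a,b \in [0,1],\ a+b \leq 2p\}$. Since the expression is linear in $(a,b)$ with nonnegative coefficients, its max in $(a,b)$ is attained on the boundary $a + b = 2p$; substituting reduces the problem to a linear function of $q$ that, in the meaningful weak-learning regime $p \leq \tfrac{1}{2}$, is maximized at $q = p$. At $q = p$ and $a + b = 2p$ the bound collapses to $p(1-p)(a+b) + p^2 = 2p^2(1-p) + p^2 = 3p^2 - 2p^3$, matching the theorem. The main obstacle is exactly this last step: one has to chase through the signs of the linear coefficients to verify that the worst case simultaneously saturates $q = p$ and $a + b = 2p$ rather than balancing the two constraints at a strictly smaller value. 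Everything else --- the case decomposition and the translation of the $D_2$, $D_3$ error bounds into constraints on $a,b$ and on $\Pr_D[h_1 \neq h_2]$ --- is mechanical.
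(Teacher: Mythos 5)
The paper does not prove this theorem; it is cited directly from \cite{schapire1990strength}, so there is no in-paper argument to compare against. Your proof is a correct from-first-principles derivation, and it matches the standard route for this classical result: decompose the majority's error into the disjoint events $\{h_1,h_2 \text{ both err}\}$ and $\{h_1\neq h_2,\ h_3 \text{ errs}\}$, translate the $D_2$ and $D_3$ error bounds into constraints on $(q,a,b)$, and maximize the resulting trilinear form.

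One place to tighten the write-up: after you observe that the maximum in $(a,b)$ is attained on $a+b=2p$, the expression $qb(1-p)+pa(1-q)+pq$ is \emph{not} yet a single linear function of $q$, because the split between $a$ and $b$ remains a free parameter. The clean way to finish is to note that the feasible region for $(a,b)$ is the polytope $\{a,b\geq 0,\ a+b\leq 2p\}$ (assuming $p\leq\frac12$ so the caps $a,b\leq 1$ are inactive), hence the max in $(a,b)$ sits at a vertex, i.e.\ at $(2p,0)$ or $(0,2p)$. At $(2p,0)$ the objective becomes $2p^2+qp(1-2p)$, and at $(0,2p)$ it becomes $pq(3-2p)$; both are nondecreasing in $q$ on $[0,p]$ when $p\leq\frac12$ and both evaluate to $3p^2-2p^3$ at $q=p$. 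You flag this bookkeeping as ``the main obstacle,'' which is exactly right; with the vertex argument made explicit the proof is complete for $p\leq\frac12$ (the only regime in which the theorem has content, since $3p^2-2p^3\geq\frac12$ once $p\geq\frac12$).
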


As opposed to the main motivation for boosting where the learner only has access to a learning algorithm of error $p = \frac 12  - \gamma$, in our setting we can learn a classifier to \emph{any desired error rate $p$} as long as we have a sample set of $m_{p, \delta}$ correctly labeled instances. The larger the error rate $p$, the smaller the total number of label queries needed for producing a correctly labeled set of the appropriate size. 
We use this idea in Algorithm~\ref{alg:boost}. In particular, we learn
classifiers of error $O(\sqrt{\epsilon})$ using sample sets of size $O(m_{\sqrt \epsilon, \delta})$ that are labeled by majority vote of 
$O(\log(m_{\sqrt \epsilon, \delta}))$ labelers, using fewer label queries overall than \textsc{Baseline}.

\paragraph{Probabilistic Filtering:}
Given classifier $h_1$, the second step of the classical boosting algorithm requires distribution $\D$ to be reweighed based  on the correctness of $h_1$. This step can be done by a \emph{filtering} process as follows: Take a large set of labeled samples from $\D$ and divide them to two sets depending on whether or not the instances are mislabeled by $h_1$. Distribution $\D_2$, in which instances mislabeled by $h_1$ make up half of the weight, can be simulated by picking each set with probability $\frac 12$ and taking an instance from that set uniformly at random. 
To implement \emph{filtering} in our setting, however, we would need to first get high quality labels for the set of instances used for simulating $\D_2$. Furthermore, this sample set is typically large, since at least  $\frac 1p m_{p, \delta}$ random samples from $\D$ are needed to simulate $\D_2$ that has half of its weight on the points that $h_1$ mislabels (which is a $p$ fraction of the total points).
In our case where $p = O(\sqrt{\epsilon})$, getting high quality labels for such a large sample set requires $O\left(m_{\epsilon, \delta} \ln\left( \frac{m_{\epsilon, \delta}}{\delta}\right) \right)$ label queries, which is as large as the total number of labels queried by \textsc{Baseline}.

\begin{algorithm}
\SetAlgoNoLine
Let $S_I = \emptyset$ and $N = \log\left( \frac{1}{\epsilon}  \right)$.\\
\For{$x\in S$}{
	\For{$t = 1, \dots, N$}{
		Draw a random labeler  $i\sim P$ and let $y_t = \ell_i(x)$\\
		\textbf{If} $t$ is odd and $\maj(y_{1:t}) = h(x)$, \textbf{then} break.   \label{item:break} 
	}
Let $S_I = S_I \cup \{ x\}$.     \hfill  // Reaches this step when for all $t$, $\maj(y_{1:t}) \neq h(x)$}
\Return{$S_I$}
\caption{\textsc{Filter}$(S, h)$}
\label{alg:filter}
\end{algorithm}

In this work, we introduce a \emph{probabilistic filtering} approach, called \textsc{Filter},  that only requires $O\left( m_{\epsilon, \delta} \right)$ label queries, i.e., $O(1)$ cost per labeled example.
Given classifier $h_1$ and an unlabeled sample set $S$, $\textsc{Filter}(S, h_1)$ returns a set $S_I\subseteq S$ such that \emph{for any $x\in S$ that is mislabeled by $h_1$, $x\in S_I$ with probability at least $\Theta(1)$. Moreover, any $x$ that is correctly labeled by $h_1$ is most likely not included in $S_I$.}
This procedure is described in detail in Algorithm~\ref{alg:filter}. Here, we provide a brief description of its working:~
For any $x \in S$, \textsc{Filter} queries one labeler at a time, drawn at random, until the majority of the labels it has acquired so far agree with $h_1(x)$, at which point \textsc{Filter} removes $x$ from consideration. On the other hand, if the majority of the labels never agree with $h_1(x)$, \textsc{Filter} adds $x$ to the output set $S_I$.
Consider $x\in S$ that is correctly labeled by $h$. Since each additional label agrees with $h_1(x) = f^*(x)$ with probability $\geq 0.7$, with high probability the majority of the labels on $x$ will agree with $f^*(x)$ at some point, in which case \textsc{Filter} stops asking for more queries and removes $x$. As we show in Lemma~\ref{lem:h_2-label} this happens within $O(1)$ queries most of the time.
On the other hand, for $x$ that is mislabeled by $h$, a labeler agrees with $h_1(x)$ with probability $\leq 0.3$. Clearly, for one set of random labelers ---one snapshot of the labels queried by \textsc{Filter}--- the majority label agrees with $h_1(x)$ with a very small probability. As we show in Lemma~\ref{lem:filter}, even when considering the progression of all labels queried
by \textsc{Filter} throughout the process, with probability $\Theta(1)$ the majority label never agrees with $h_1(x)$. Therefore, $x$ is added to $S_I$ with probability $\Theta(1)$. 

\paragraph{Super-sampling:} 
Another key technique we use in this work is \emph{super-sampling}. In short, this  means that as long as we have the correct label of the sampled points and we are in the realizable setting, more samples never hurt the algorithm. Although this  seems trivial at first, it does play an important role in our approach.
In particular, our probabilistic filtering procedure does not necessarily simulate $\D_2$ but a distribution $\D'$, such that $\Theta(1)d_2(x) \leq d'(x)$ for all $x$, where $d_2$ and $d'$ are the densities of $D_2$ and $D'$, respectively. At a high level, sampling $\Theta(m)$ instances from $\D'$ simulates a super-sampling process that samples $m$ instances from $\D_2$ and then adds in some arbitrary instances. This is formally stated below and is proved in Appendix~\ref{app:super-sample}.

\begin{lemma}\label{lem:super-sample}
Given a hypothesis class $\F$ consider any two discrete distributions $\D$ and $\D'$ such that for all $x$, $d'(x) \geq c\cdot d(x)$ for an absolute constant $c>0$, and both distributions are labeled according to $f^*\in \F$.
There exists a constant $c'>1$ such that for any  $\epsilon$ and $\delta$, with probability $1-\delta$ over a labeled  sample set $S$ of size $c' m_{\epsilon, \delta}$ drawn from  $\D'$, $\O_\F(S)$ has error of at most $\epsilon$ with respect to distribution $\D$.
\end{lemma}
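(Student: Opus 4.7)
The plan is to couple draws from $\D'$ to draws from $\D$ via a \emph{mixture decomposition}. Since $d'(x) \geq c\,d(x)$ for all $x$ and both are probability distributions, summing over $x$ forces $c \leq 1$; then $q(x) := (d'(x) - c\,d(x))/(1-c)$ is nonnegative and sums to $1$, so it defines a distribution $Q$ satisfying $d'(x) = c\cdot d(x) + (1-c)\cdot q(x)$. Consequently, a draw from $\D'$ is equivalent to first flipping a biased coin with $\Pr[\text{heads}] = c$ and then sampling from $\D$ on heads and from $Q$ on tails. Because both $\D$ and $\D'$ are labeled by the same $f^*\in\F$, this coupling preserves labels.

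I would then pick $c'$ to be a sufficiently large constant depending only on $c$ so that, among the $c'\,m_{\epsilon,\delta}$ i.i.d.\ draws from $\D'$ comprising $S$, the expected number of ``heads'' is at least $2\,m_{\epsilon,\delta/2}$. Here I use the standard fact that $m_{\epsilon,\delta/2}/m_{\epsilon,\delta} = O(1)$, evident from the VC bound $m_{\epsilon,\delta} = O(\epsilon^{-1}(d\log(1/\epsilon) + \log(1/\delta)))$ quoted in the preliminaries. A multiplicative Chernoff bound then gives that, with probability at least $1-\delta/2$, the subset $S_\D \subseteq S$ of samples falling in the $\D$-branch has size at least $m_{\epsilon,\delta/2}$; the bound is comfortable because $m_{\epsilon,\delta} = \Omega(\log(1/\delta))$ forces the Chernoff exponent to exceed $\log(2/\delta)$.

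Conditioned on the coin-flip sequence, the points of $S_\D$ are i.i.d.\ from $\D$ with correct labels supplied by $f^*$. The oracle $\O_\F(S)$ is guaranteed to return some $f\in\F$ consistent with $S$ (rather than ``None''), because $f^*$ itself is consistent with $S$; in particular $f$ is consistent with the $\geq m_{\epsilon,\delta/2}$ labeled samples in $S_\D$. By the realizable PAC guarantee defining $m_{\epsilon,\delta/2}$, any such $f$ has $\err_\D(f) \leq \epsilon$ with probability at least $1-\delta/2$. A union bound over the Chernoff event and the PAC event yields the claimed $1-\delta$ success probability.

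The one step requiring care is verifying that a single constant $c'$ depending only on $c$ suffices uniformly in $(\epsilon,\delta)$. This is precisely where the lower bound $m_{\epsilon,\delta} = \Omega(\log(1/\delta))$ does the work, letting both the Chernoff failure probability and the PAC failure probability be controlled by the same choice of $c'$; everything else is a mechanical application of coupling and realizable PAC uniform convergence.
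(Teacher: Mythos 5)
Your proof is correct, but it takes a genuinely different route from the paper's. The paper observes that the pointwise density inequality gives, \emph{for every $f\in\F$ deterministically}, $\err_{\D'}(f) = \sum_x d'(x)\mathds{1}_{f(x)\neq f^*(x)} \geq c\cdot\err_\D(f)$; hence any $f$ consistent with a sample from $\D'$ that generalizes to error $\leq c\epsilon$ on $\D'$ automatically has error $\leq\epsilon$ on $\D$, and the result follows by applying the realizable PAC bound on $\D'$ at accuracy $c\epsilon$ together with $m_{c\epsilon,\delta} = O(\frac{1}{c}m_{\epsilon,\delta})$. You instead use a mixture/coupling decomposition $d' = c\,d + (1-c)\,q$, extract the $\D$-branch via a coin flip, apply a Chernoff bound to guarantee $\geq m_{\epsilon,\delta/2}$ genuine $\D$-samples land in $S$, and invoke the PAC bound on $\D$ directly with a union bound. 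Both are valid. The paper's argument is shorter and entirely deterministic once uniform convergence on $\D'$ is invoked --- no concentration inequality, no $m_{\epsilon,\delta} = \Omega(\log(1/\delta))$ side condition needed. Your coupling view is arguably more transparent about \emph{why} sampling from $\D'$ suffices (a constant fraction of the draws secretly come from $\D$), and generalizes more readily to settings where the deterministic error inequality isn't available, but it pays for this with a Chernoff step whose constants depend on the lower bound $m_{\epsilon,\delta} = \Omega(\log(1/\delta))$, and it requires slightly more bookkeeping to make $c'$ uniform in $(\epsilon,\delta)$.
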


With these techniques at hand, we present Algorithm~\ref{alg:boost}. At a high level, the algorithm proceeds in three phases, one for each classifier used by Theorem~\ref{thm:schapire}.
In Phase 1, the  algorithm learns $h_1$ such that $\err_{\D}(h_1) \leq \frac 12\sqrt{\epsilon}$.
In Phase 2, the algorithm first filters a set of size $O(m_{\epsilon, \delta})$ into the set $S_I$ and takes an additional
set $S_C$ of $\Theta(m_{\sqrt \epsilon, \delta})$ samples. Then, it queries $O(\log(\frac{m_{\epsilon, \delta}}{\delta}))$ labelers on each instance in $S_I$ and $S_C$ to get their correct labels with high probability. Next, it partitions these instances to two different sets based on whether or not $h_1$ made a mistake on them. It then learns $h_2$ on a sample set $\bar W$ that is drawn by weighting these two sets equally. As we show in Lemma~\ref{lem:h_2-err}, $\err_{\D_2}(h_2) \leq \frac 12 \sqrt\epsilon$.
In phase 3, the algorithm learns $h_3$ on a sample set $S_3$  drawn from $\DX$ conditioned on $h_1$ and $h_2$ disagreeing. Finally, the algorithm returns $\Maj(h_1, h_2, h_3)$.

\begin{algorithm}[t]
\SetAlgoNoLine
\KwIn{Given a distribution $\DX$, a class of hypotheses $\F$, parameters $\epsilon$ and $\delta$.}
\vspace*{2pt}

\noindent{\textbf{Phase 1}}:

\Indp Let $\bar{S_1} = \textsc{Correct-Label}(S_1, \delta/6)$,  for a set of sample $S_1$ of size $2 m_{\sqrt{\epsilon}, \delta/6}$ from $\DX$.

Let $h_1  = \O_\F(\bar{S_1})$.

\Indm \noindent{\textbf{Phase 2}}:

\Indp Let $S_I = \textsc{Filter} (S_2, h_1)$, for a set of samples $S_2$ of size $\Theta(m_{\epsilon, \delta})$ drawn from $\DX$.

Let $S_C$ be a sample set of size $\Theta(m_{\sqrt{\epsilon}, \delta})$ drawn from $\DX$.

Let $\bar{S_{All}} = \textsc{Correct-Label}(S_I \cup S_C, \delta/6)$.

Let $\bar{W_I} = \{ (x, y) \in \bar{S_{All}} \mid y \neq h_1(x)\}$ and Let $\bar{W_C} = \bar{S_{All}} \setminus \bar{W_I}$.

Draw a sample set $\bar W$ of size $\Theta(m_{\sqrt{\epsilon}, \delta})$ from a distribution that equally  weights $\bar{W_I}$ and $\bar{W_C}$.

Let $h_2 = \O_\F(\bar{W})$.

\Indm \noindent{\textbf{Phase 3}}:

\Indp Let $\bar{S_3} = \textsc{Correct-Label}(S_3, \delta/6)$, for a sample set $S_3$ of size  $2m_{\sqrt{\epsilon}, \delta/6}$ drawn from $\DX$ conditioned on $h_1(x) \neq h_2(x)$.

Let $h_3 = \O_\F(\bar{S_3})$.

\Indm \Return{$\maj(h_1, h_2, h_3)$.}
\vspace*{4pt}

\hrule \vspace*{2pt}
\textbf{\textsc{Correct-Label}$(S, \delta)$:}
\vspace*{2pt}\hrule 

\For{$x\in S$}{ 
	Let $L\sim P^k$ for a set of $k= O(\log(\frac{|S|}{\delta}))$ labelers drawn from $P$ and
		$\bar S \gets  \bar S \cup \{ (x, \maj_L(x)) \}$.	
}
\Return{$\bar S$}.
\caption{\textsc{Interleaving: Boosting By Probabilistic Filtering for $\alpha = \frac 12 + \Theta(1)$}}
\label{alg:boost}
\end{algorithm}

\begin{theorem}[$\boldsymbol{\mathbf{\alpha = \frac 12 + \Theta(1)}}$ case]
\label{thm:main-manyPerfect-informal}
Algorithm~\ref{alg:boost} uses oracle $\O_\F$, runs in time $\poly(d, \frac 1\epsilon, \ln(\frac 1\delta))$ and with probability $1-\delta$ returns $f\in \F$ with $\err_{\D}(f) \leq \epsilon$, using $\query = O\left(  \sqrt{\epsilon} \log\left(\frac{m_{\sqrt{\epsilon}, \delta}}{\delta}  \right) + 1\right)$ cost per labeled example, $\golden = 0$ golden queries, and  $\load = 1$ load.
Note that when $\frac{1}{\sqrt \epsilon} \geq  \log\left(\frac{m_{\sqrt{\epsilon}, \delta}}{\delta}  \right)$, the above cost per labeled sample is $O(1)$.
\end{theorem}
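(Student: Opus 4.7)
The plan is to verify three things: (i) the hypotheses $h_1,h_2,h_3$ produced by Phases~1--3 satisfy the preconditions of Schapire's three‐hypothesis boosting theorem with base error $p=\tfrac12\sqrt{\epsilon}$, so $\err_\D(\maj(h_1,h_2,h_3))\le 3p^2-2p^3\le\tfrac34\epsilon\le\epsilon$; (ii) the total query count matches the stated $\query$ bound; (iii) load and golden queries. Items (ii) and (iii) are bookkeeping, while (i) decomposes over the three phases.

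Phase~1 is the easiest. Since $\alpha\ge 0.7$, a Chernoff bound on the $k=O(\log(|S_1|/\delta))$ independent labelers used by \textsc{Correct-Label}, together with a union bound over $|S_1|=2m_{\sqrt{\epsilon},\delta/6}$, shows that with probability $1-\delta/6$ every pair in $\bar{S_1}$ is correctly labeled; feeding this into $\O_\F$ yields $\err_\D(h_1)\le\tfrac12\sqrt{\epsilon}$ by the realizable PAC guarantee. Phase~3 is symmetric once we check the sampling distribution: drawing from $\DX$ conditioned on $\{x:h_1(x)\ne h_2(x)\}$ is exactly $\D_3$ in Theorem~\ref{thm:schapire}, \textsc{Correct-Label} produces the correct labels w.h.p., and $\O_\F$ returns $h_3$ with $\err_{\D_3}(h_3)\le\tfrac12\sqrt{\epsilon}$.

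The main work is Phase~2. I will invoke two lemmas about \textsc{Filter} (to be proved as Lemmas~\ref{lem:filter} and \ref{lem:h_2-label}): for every $x$ mislabeled by $h_1$, $x\in S_I$ with probability at least a constant $c_1$; for every $x$ correctly labeled by $h_1$, the expected number of queries \textsc{Filter} spends on $x$ is $O(1)$. The first gives $\Pr_{x\sim\D_{\mid\X}}[x\in S_I\mid x]\ge c_1\cdot\one[h_1(x)\ne f^*(x)]$, hence the (unconditional) density induced on $S_I$ satisfies $d_{S_I}(x)\ge c_1\cdot d(x)\cdot\one[h_1(x)\ne f^*(x)]$; since $S_C\sim\DX$ contributes an analogous lower bound on the correctly labeled region, the mixture $\bar W$ obtained by equally weighting $\bar{W_I}$ and $\bar{W_C}$ has density $d'$ satisfying $d'(x)\ge c\cdot d_2(x)$ for an absolute constant $c$, where $d_2$ is the density of $\D_2$. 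Because $|S_2|=\Theta(m_{\epsilon,\delta})$, the mislabeled fraction has size $\Theta(\sqrt{\epsilon}\cdot m_{\epsilon,\delta})=\Theta(m_{\sqrt{\epsilon},\delta})$ in expectation, matching the scale of $S_C$ and $\bar W$. All labels in $\bar W$ are correct w.h.p. (Chernoff plus union bound over $|\bar{S_{All}}|=O(m_{\sqrt{\epsilon},\delta})$), so Lemma~\ref{lem:super-sample} applied to $d'$ and $\D_2$ with sample size $\Theta(m_{\sqrt{\epsilon},\delta})$ delivers $\err_{\D_2}(h_2)\le\tfrac12\sqrt{\epsilon}$, completing the input to Schapire's theorem.

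For the counting: the three invocations of \textsc{Correct-Label} together query $O(m_{\sqrt{\epsilon},\delta}\log(m_{\sqrt{\epsilon},\delta}/\delta))$ labels, while \textsc{Filter} contributes $O(m_{\epsilon,\delta})$ queries in expectation by the $O(1)$ bound above. Dividing by $m_{\epsilon,\delta}$ and using $m_{\sqrt{\epsilon},\delta}/m_{\epsilon,\delta}=O(\sqrt{\epsilon})$ gives $\query=O\bigl(\sqrt{\epsilon}\log(m_{\sqrt{\epsilon},\delta}/\delta)+1\bigr)$. Every individual query (inside either \textsc{Filter} or \textsc{Correct-Label}) draws a fresh labeler from $P$, so in the infinite-support regime the load is $\load=1$; the algorithm never queries $f^*$ directly, so $\golden=0$. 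A final union bound over the $O(1)$ failure events (one per phase plus the filter tail event), each budgeted at $\delta/6$, yields the overall $1-\delta$ guarantee. The chief obstacle is the filter tail analysis: showing that a mislabeled $x$ survives with constant probability requires reasoning about the \emph{entire random‐walk trajectory} of the running majority rather than a single snapshot, which is a ballot/optional‐stopping argument rather than a routine Chernoff bound.
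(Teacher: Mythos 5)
Your decomposition matches the paper's almost exactly: Schapire three-hypothesis boosting with base error $p=\tfrac12\sqrt{\epsilon}$, correct labeling of $S_1$ and $S_3$ via Chernoff plus a union bound, the filter/super-sampling argument for Phase~2, the random-walk (gambler's ruin) argument for why a mislabeled $x$ survives \textsc{Filter} with constant probability, and Bernstein-type concentration for the filter's total query count. So the route is the same one the paper takes.

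There is, however, one concrete gap in the bookkeeping. You claim the three invocations of \textsc{Correct-Label} together cost $O\bigl(m_{\sqrt{\epsilon},\delta}\log(m_{\sqrt{\epsilon},\delta}/\delta)\bigr)$, which requires $|S_I\cup S_C|=O(m_{\sqrt{\epsilon},\delta})$; but the two facts you invoke about \textsc{Filter} --- constant retention probability for mislabeled $x$, and $O(1)$ \emph{expected} queries per correctly labeled $x$ --- do not imply this. The constant retention probability only lower-bounds the mislabeled contribution, and by Markov the $O(1)$ expectation bound only gives a retention probability of $O(1/\log(1/\epsilon))$ for correctly labeled points (since being retained forces all $N=O(\log(1/\epsilon))$ rounds to fail), not the $O(\sqrt{\epsilon})$ needed. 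With only the Markov bound, $|S_I|$ could be $\Theta(m_{\epsilon,\delta}/\log(1/\epsilon))$, and the \textsc{Correct-Label} step on $S_I$ would then contribute $\Theta\bigl(\log(m_{\epsilon,\delta}/\delta)/\log(1/\epsilon)\bigr)$ per labeled example, which in general exceeds the stated $O\bigl(\sqrt{\epsilon}\log(m_{\sqrt{\epsilon},\delta}/\delta)+1\bigr)$. What's missing is the first half of the paper's Lemma~\ref{lem:filter}: a correctly labeled $x$ ends up in $S_I$ with probability at most $\sqrt{\epsilon}$ (by a Chernoff bound at round $t=N$, independent of the random-walk argument). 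That statement is what drives Lemma~\ref{lem:size} and makes $|S_I|=O(m_{\sqrt{\epsilon},\delta})$. Once you add it, the rest of your counting goes through.
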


We start our analysis of Algorithm~\ref{alg:boost} by stating that $\textsc{Correct-Label}(S, \delta)$ labels $S$ correctly, with probability $1-\delta$. This is direct application of the Hoeffding bound and its proof is omitted.
\begin{lemma}
For any unlabeled sample set  $S$, $\delta >0$, and $\bar{S} = \textsc{Correct-Label}(S, \delta)$, 
with probability $1-\delta$, for all $(x,y) \in \bar S$, $ y= f^*(x)$.
\end{lemma}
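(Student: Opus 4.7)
The plan is to bound, for each $x \in S$, the probability that $\maj_L(x) \neq f^*(x)$ by $\delta/|S|$ via Hoeffding's inequality, and then apply a union bound over the $|S|$ instances. Concretely, fix $x \in S$ and let $L = (\ell_{i_1}, \ldots, \ell_{i_k})$ be the $k$ i.i.d.\ labelers drawn from $P$. For each $j$, the indicator $Z_j = \one[\ell_{i_j}(x) = f^*(x)]$ is a Bernoulli random variable, and since every perfect labeler certainly outputs $f^*(x)$, we have $\E[Z_j] \geq \alpha \geq 0.7$ in the regime considered in this section. In particular, $\E[Z_j] - \tfrac{1}{2} \geq \gamma$ for $\gamma = 0.2$, and the $Z_j$'s are mutually independent because the labelers are drawn independently.

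Next, I would apply Hoeffding's inequality to the sum $\tfrac{1}{k}\sum_j Z_j$: the event that $\maj_L(x) \neq f^*(x)$ is contained in the event $\tfrac{1}{k}\sum_j Z_j \leq \tfrac{1}{2}$, which in turn is contained in $\tfrac{1}{k}\sum_j Z_j - \E[Z_1] \leq -\gamma$. Hoeffding gives
\[
\Pr\!\left[\maj_L(x) \neq f^*(x)\right] \;\leq\; \exp(-2\gamma^2 k) \;=\; \exp(-0.08\, k).
\]
Choosing the constant in $k = O(\log(|S|/\delta))$ large enough, this is at most $\delta/|S|$.

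Finally, because $\textsc{Correct-Label}$ draws a fresh independent set of $k$ labelers for each $x \in S$, a union bound over the $|S|$ examples yields
\[
\Pr\!\left[\exists (x,y) \in \bar{S} : y \neq f^*(x)\right] \;\leq\; |S| \cdot \frac{\delta}{|S|} \;=\; \delta,
\]
as claimed.

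There is no real obstacle here: the only subtle point is verifying that the per-label success probability is at least $\alpha \geq 0.7$, which follows because perfect labelers (a $\geq 0.7$ fraction of $P$) deterministically return $f^*(x)$, while adversarial labelers can only push the agreement probability higher. The rest is a textbook concentration-plus-union-bound argument, which is why the authors omit the proof.
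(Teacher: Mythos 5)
Your proof is correct and follows exactly the route the paper indicates (the paper states the result is a "direct application of the Hoeffding bound" and omits the proof): for each $x$, bound $\Pr[\maj_L(x) \neq f^*(x)]$ by $\exp(-2\gamma^2 k) \leq \delta/|S|$ using $\E[Z_j] \geq \alpha \geq 0.7$, then union bound over $S$. Nothing to flag.
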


Note that as a direct consequence of the above lemma, Phase 1 of Algorithm~\ref{alg:boost} achieves error of $O(\sqrt \epsilon)$.
\begin{lemma} \label{lem:h_1-err}
In Algorithm~\ref{alg:boost},  with probability $1 - \frac \delta 3$, $\err_D(h_1) \leq \frac 12 \sqrt{\epsilon}$.
\end{lemma}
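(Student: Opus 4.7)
The proof is essentially two-step bookkeeping, combining the guarantee of \textsc{Correct-Label} (the preceding lemma) with the standard realizable PAC sample complexity bound.

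The plan is as follows. First, I would apply the preceding lemma to the call $\bar{S_1} = \textsc{Correct-Label}(S_1, \delta/6)$: with probability at least $1 - \delta/6$, every pair $(x,y) \in \bar{S_1}$ satisfies $y = f^*(x)$. Call this event $E_1$. Conditioned on $E_1$, the set $\bar{S_1}$ is distributed as a fresh i.i.d.\ sample of size $2\,m_{\sqrt{\epsilon},\delta/6}$ drawn from the realizable distribution $\D$ and correctly labeled by $f^*$, so in particular $f^* \in \F$ is consistent with $\bar{S_1}$ and the oracle $\O_\F(\bar{S_1})$ returns some $h_1 \in \F$ consistent with $\bar{S_1}$.

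Second, I would invoke the realizable PAC guarantee. By the VC bound quoted in the model section, $m_{\epsilon',\delta'} \in O(\epsilon'^{-1}(d\ln(1/\epsilon') + \ln(1/\delta')))$ scales linearly in $1/\epsilon'$ up to a logarithmic factor, so for suitable absolute constants one has $2\,m_{\sqrt{\epsilon},\delta/6} \geq m_{\frac 12\sqrt{\epsilon},\delta/6}$. Hence with probability at least $1 - \delta/6$ over the draw of $S_1 \sim \DX^{|S_1|}$, every hypothesis in $\F$ that is consistent with the correctly labeled sample has error at most $\frac 12 \sqrt{\epsilon}$ under $\D$. Call this event $E_2$. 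On $E_1 \cap E_2$, the classifier $h_1 = \O_\F(\bar{S_1})$ is consistent with a correctly labeled $\bar{S_1}$, so $\err_\D(h_1) \leq \frac 12 \sqrt{\epsilon}$.

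A union bound over $E_1^c$ and $E_2^c$ gives total failure probability at most $\delta/6 + \delta/6 = \delta/3$, yielding the lemma. The only substantive point is verifying the factor-of-two slack in the sample size; this is not really an obstacle, only a matter of choosing the hidden constant in $m_{\epsilon,\delta}$ large enough that the linear scaling in $1/\epsilon$ absorbs the change from $\sqrt{\epsilon}$ to $\frac 12\sqrt{\epsilon}$.
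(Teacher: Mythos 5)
Your proof is correct and takes essentially the same route the paper intends: the paper omits a written proof and simply states that Lemma~\ref{lem:h_1-err} is a direct consequence of the preceding Correct-Label lemma combined with the standard realizable PAC sample bound, which is exactly the two-event union bound (Correct-Label succeeds; the consistent hypothesis generalizes) that you spell out. The only quibble is your inequality $2\,m_{\sqrt{\epsilon},\delta/6} \geq m_{\frac 12\sqrt{\epsilon},\delta/6}$ -- because of the $d\ln(1/\epsilon')$ factor, doubling $m_{\sqrt{\epsilon},\delta/6}$ falls short of $m_{\frac 12\sqrt{\epsilon},\delta/6}$ by an additive $\Theta(d/\sqrt{\epsilon})$ term; but as you correctly note, this is absorbed by the implicit constant in the $O(\cdot)$ defining $m_{\epsilon,\delta}$, so the argument stands.
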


Next, we prove that \textsc{Filter}  removes instances that are correctly labeled by $h_1$ with good probability and retains instances that are mislabeled by $h_1$ with at least a constant probability.

\begin{lemma} \label{lem:filter}
Given any sample set $S$ and classifier $h$, for every  $x\in S$
\begin{enumerate}[topsep=0pt,itemsep=0pt,partopsep=0pt,parsep=0pt]
\item  If $h(x) = f^*(x)$, then $x \in \textsc{Filter}(S, h)$ with probability $<\sqrt{\epsilon}$. 
\item  If $h(x) \neq f^*(x)$, then $x \in \textsc{Filter}(S, h)$ with probability $\geq 0.5$. 
\end{enumerate}
\end{lemma}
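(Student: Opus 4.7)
The plan is to reduce each part of the lemma to a one-dimensional biased random walk. Fix $x \in S$, and let $Z_t = +1$ if the $t$-th queried labeler's response $y_t$ equals $h(x)$, and $Z_t = -1$ otherwise; set $Y_t = \sum_{s=1}^{t} Z_s$ with $Y_0 = 0$. Because each step is $\pm 1$, the parity of $Y_t$ always matches that of $t$; in particular, for odd $t$ the event $\maj(y_{1:t}) = h(x)$ is exactly $Y_t \geq 1$. Since $Y$ moves in unit steps, it can only first hit $+1$ at an odd time, so \textsc{Filter} breaks at some $t \leq N$ iff the walk reaches $+1$ within $N$ steps. Equivalently,
\[
\{\, x \in \textsc{Filter}(S,h) \,\} \;=\; \{\, Y_t \leq 0 \text{ for all } t \leq N \,\}.
\]
This reframes the lemma as two estimates on how long a simple random walk stays at or below $0$.

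For Part 2, assume $h(x) \neq f^*(x)$. Every perfect labeler returns $f^*(x) \neq h(x)$, so $p := \Pr[Z_t = +1] \leq 1-\alpha \leq 0.3$. This is a biased walk with negative drift, and by the classical gambler's ruin identity the probability of \emph{ever} reaching $+1$ from $0$ equals exactly $p/(1-p) \leq (1-\alpha)/\alpha$. Truncation to $N$ steps only decreases this probability, so
\[
\Pr[\,x \in \textsc{Filter}(S,h)\,] \;\geq\; 1 - \frac{1-\alpha}{\alpha} \;\geq\; \tfrac{1}{2},
\]
where the last bound uses the standing assumption $\alpha \geq 0.7$ (any $\alpha \geq 2/3$ would do).

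For Part 1, assume $h(x) = f^*(x)$. Now $\Pr[Z_t = +1] \geq \alpha > \tfrac{1}{2}$ and $\E[Y_t] \geq (2\alpha-1)\,t$. Let $t^*$ be the largest odd integer with $t^* \leq N$. Then $\{x \in \textsc{Filter}(S,h)\} \subseteq \{Y_{t^*} \leq 0\}$, and Hoeffding's inequality applied to $Z_1,\dots,Z_{t^*}$ yields
\[
\Pr[\,Y_{t^*} \leq 0\,] \;\leq\; \exp\!\bigl(-\,2\,t^*(\alpha - \tfrac{1}{2})^2\bigr).
\]
Taking $N = \Theta(\log(1/\epsilon))$ with a leading constant (depending on the gap $\alpha-\tfrac{1}{2}$) chosen to make the exponent at least $\tfrac{1}{2}\log(1/\epsilon)$ delivers the desired bound $\leq \sqrt{\epsilon}$.

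The main obstacle is Part 2: a union bound of Hoeffding tails over all odd $t \leq N$ blows up with $N$ and cannot yield a constant bound below $\tfrac{1}{2}$. The right tool is the sharp hitting probability for a biased random walk, and to invoke it cleanly one must identify the ``break'' condition of \textsc{Filter} with the walk first reaching $+1$; the parity observation at the start is precisely what enables this identification and avoids any case analysis of odd versus even check times.
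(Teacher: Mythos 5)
Your proof is correct and takes essentially the same approach as the paper: a Chernoff/Hoeffding bound at the final odd time step for Part~1, and a biased-random-walk hitting (gambler's ruin) probability for Part~2. The only cosmetic difference is that you invoke the infinite-horizon ruin probability $p/(1-p)$ together with monotonicity under truncation, whereas the paper plugs into the finite-$N$ ruin formula directly; both give the same $4/7 \geq 1/2$ bound.
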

\begin{proof}
For the first claim, note that $x \in S_I$  only if $\maj(y_{1:t}) \neq h(x)$ for all $t\leq N$. 
Consider $t=N$ time step.
Since each random query agrees with $f^*(x) = h(x)$ with probability $\geq 0.7$ independently, majority of $N = O(\log(1/\sqrt{\epsilon}))$ labels are correct with probability at least $1- \sqrt{\epsilon}$. Therefore, the probability that the majority label disagrees with $h(x) = f^*(x)$ at every time step is at most $\sqrt{\epsilon}$.

In the second claim, we are interested in the probability that there exists some $t\leq N$, for which  $\maj(y_{1:t}) = h(x) \neq f^*(x)$.
This is the same as the probability of return in biased random walks, also called the probability of ruin in  gambling~\citep{feller2008introduction}, where we are given a random walk that takes a step to the right with probability $\geq 0.7$ and takes a step to the left with the remaining probability and we are interested in the probability that this walk  ever crosses  the origin to the left while taking $N$ or even infinitely many steps. Using the probability of return for biased random walks (see Theorem~\ref{thm:ruin}), the probability that $\maj(y_{1:t}) \neq f^*(x)$ ever is   at most
$\left( 1 - \left( \frac{0.7}{1- 0.7} \right)^{N} \right)  /  \left( 1 - \left( \frac{0.7}{1-0.7} \right)^{N+1} \right) <  \frac{3}{7}.
$
Therefore, for each $x$ such that  $h(x) \neq f^*(x)$, $x\in S_I$ with probability at least $4/7$.
\end{proof}

In the remainder of the proof, for ease of exposition we assume that not only $\err_{\D}(h_1) \leq \frac 12 \sqrt{\epsilon}$ as per Lemma~\ref{lem:h_1-err}, but in fact $\err_{\D}(h_1) = \frac 12 \sqrt{\epsilon}$. This assumption is not needed for the correctness of the results but it helps simplify the notation and analysis.
As a direct consequence of Lemma~\ref{lem:filter} and application of the Chernoff bound, we deduce that with high probability $\bar{W}_I$, $\bar{W}_C$, and $S_I$ all have size $\Theta(m_{\sqrt \epsilon, \delta})$. 
The next lemma, whose proof appears in Appendix~\ref{app:size}, formalizes this claim.
\begin{lemma}\label{lem:size}
With probability $1-\exp(- \Omega(m_{\sqrt\epsilon, \delta}))$,  $\bar{W}_I$, $\bar{W}_C$, and $S_I$ all have size  $\Theta( m_{\sqrt\epsilon, \delta})$. 
\end{lemma}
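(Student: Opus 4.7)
The plan is a direct second-moment/Chernoff computation, using Lemma~\ref{lem:filter} to identify the per-sample probabilities and the working assumption $\err_{\D}(h_1) = \tfrac12\sqrt{\epsilon}$. First I would compute the expected sizes. For each $x \in S_2$, split on whether $h_1(x) = f^*(x)$ and apply Lemma~\ref{lem:filter} separately to each branch. Writing $p = \Pr_{x \sim \DX}[h_1(x) \neq f^*(x)] = \tfrac12 \sqrt{\epsilon}$, this gives
\[
\Pr[x \in S_I] \le (1-p)\sqrt{\epsilon} + p \le \tfrac{3}{2}\sqrt{\epsilon},
\qquad
\Pr[x \in S_I] \ge p \cdot \tfrac{4}{7} = \Theta(\sqrt{\epsilon}).
\]
Since $|S_2| = \Theta(m_{\epsilon,\delta})$ and $\sqrt{\epsilon}\,m_{\epsilon,\delta} = \Theta(m_{\sqrt{\epsilon},\delta})$, this yields $\E[|S_I|] = \Theta(m_{\sqrt{\epsilon},\delta})$. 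For $\bar{W}_I$, the same Bayes calculation gives $\Pr[x \in S_I \wedge h_1(x)\neq f^*(x)] \ge \tfrac{4}{7}p = \Theta(\sqrt{\epsilon})$ and $\le p = \tfrac12 \sqrt{\epsilon}$, and $S_C$ contributes an additional $p|S_C| = \Theta(m_{\sqrt{\epsilon},\delta})$ mislabeled points in expectation; $\bar{W}_C$ is handled identically with the correctly-labeled branch. All three expectations are $\Theta(m_{\sqrt{\epsilon},\delta})$.

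Next, I would note that the relevant indicator variables are independent: conditional on $h_1$, each $x \in S_2$ is drawn i.i.d.\ from $\DX$ and \textsc{Filter} processes each $x$ using a fresh pool of random labelers, so the events $\{x_i \in S_I\}$ and $\{x_i \in S_I \wedge h_1(x_i) \neq f^*(x_i)\}$ are independent across $i$; the same holds for the i.i.d.\ samples in $S_C$. A standard multiplicative Chernoff bound then concentrates each of $|S_I|$, $|\{x \in S_I \cup S_C : h_1(x) \neq f^*(x)\}|$, and its complement in $S_I \cup S_C$ around its $\Theta(m_{\sqrt{\epsilon},\delta})$-scale expectation with failure probability $\exp(-\Omega(m_{\sqrt{\epsilon},\delta}))$. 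A union bound over the three events preserves this probability.

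Finally, a small subtlety to address is that $\bar{W}_I$ and $\bar{W}_C$ are partitioned using the labels produced by \textsc{Correct-Label}, not the true labels $f^*$. I would handle this by running the Chernoff argument on the "idealized" counts defined via $f^*$, and separately invoking the guarantee of \textsc{Correct-Label} (with parameter $\delta/6$), which matches the two partitions exactly on the event that every label in $\bar{S_{All}}$ is correct; since that event is absorbed elsewhere in the analysis of Algorithm~\ref{alg:boost}, the size claim itself holds with the stated $1 - \exp(-\Omega(m_{\sqrt{\epsilon},\delta}))$ probability. There is no real obstacle here — the only care needed is tracking constants in the Bayes computation so that the lower bound $\Theta(\sqrt{\epsilon})$ (rather than merely $O(\sqrt{\epsilon})$) survives into the Chernoff step.
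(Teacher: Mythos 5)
Your proof is correct and follows essentially the same approach as the paper: bound the expectations of $|S_I|$, $|\bar{W}_I|$, and $|\bar{W}_C|$ via Lemma~\ref{lem:filter} together with the working assumption $\err_\D(h_1) = \frac12\sqrt{\epsilon}$, then apply a multiplicative Chernoff bound. If anything you are more explicit than the paper's one-line ``the claim follows by the Chernoff bound'': you verify independence of the relevant indicators and flag that $\bar{W}_I$ and $\bar{W}_C$ are defined through the output of $\textsc{Correct-Label}$ rather than directly through $f^*$, a point the paper handles only implicitly.
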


The next lemma combines the probabilistic filtering and super-sampling techniques to show that $h_2$  has the desired error  $O(\sqrt \epsilon)$ on $\D_2$.

\begin{lemma} \label{lem:h_2-err}
Let $\D_C$ and $\D_I$ denote  distribution $\D$ when it is conditioned on $\{x\mid h_1(x)= f^*(x)\}$ and $\{x\mid h_1(x)\neq f^*(x)\}$, respectively, and let $\D_2 = \frac 12 \D_I + \frac 12 \D_C$. With probability $1- 2\delta/3$, $\err_{\D_2}(h_2) \leq \frac12 \sqrt{\epsilon}$.
\end{lemma}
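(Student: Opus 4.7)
The plan is to show that after conditioning on a few high-probability clean events, $\bar W$ is effectively sampled (up to a bounded density reweighting) from $\D_2$, so Lemma~\ref{lem:super-sample} delivers $h_2$ with error at most $\frac{1}{2}\sqrt{\epsilon}$ on $\D_2$. I would condition throughout on three events, which together hold with probability at least $1 - 2\delta/3$: (i) \textsc{Correct-Label} labels every point in $S_I \cup S_C$ correctly (failure at most $\delta/6$); (ii) the sets $\bar{W_I}, \bar{W_C}, S_I$ all have size $\Theta(m_{\sqrt\epsilon,\delta})$ (failure $\exp(-\Omega(m_{\sqrt\epsilon,\delta}))$, from Lemma~\ref{lem:size}); and (iii) the super-sampling step in the $h_2$ learning call succeeds (failure at most $\delta/6$).

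Next I would compute the distribution $\D'_I$ of a uniformly random element of $\bar{W_I}$. The set $\bar{W_I}$ is populated from two sources, $S_2$ and $S_C$. For $x \in S_2$, the joint probability that $x$ is sampled and lands in $\bar{W_I}$ equals $d(x)\cdot p_x$ restricted to $\{x : h_1(x) \neq f^*(x)\}$, where $p_x \in [1/2, 1]$ is the Filter-acceptance probability supplied by Lemma~\ref{lem:filter}, part 2; after normalization, the induced density has a ratio to $d_I$ that lies in $[1/2, 2]$. Points of $S_C$ with $h_1(x) \neq f^*(x)$ are distributed exactly as $\D_I$. Since $\D'_I$ is a mixture of these two sources, I obtain $d'_I(x) \geq c_1 \cdot d_I(x)$ for an absolute constant $c_1 > 0$.

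The analysis of $\D'_C$, the distribution of a uniformly random element of $\bar{W_C}$, is analogous but easier: by Lemma~\ref{lem:filter}, part 1, Filter keeps at most a $\sqrt\epsilon$ fraction of the correctly-labeled points of $S_2$, so the $S_I$-contribution to $\bar{W_C}$ has size $O(\sqrt\epsilon \cdot m_{\epsilon,\delta}) = O(m_{\sqrt\epsilon,\delta})$, whereas the $S_C$-contribution is $\Theta(m_{\sqrt\epsilon,\delta})$ and is distributed exactly as $\D_C$. Hence a constant fraction of $\bar{W_C}$ is a clean i.i.d.\ sample from $\D_C$, yielding $d'_C(x) \geq c_2 \cdot d_C(x)$ for a constant $c_2 > 0$.

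Since $\bar W$ is drawn from the equal mixture of $\bar{W_I}$ and $\bar{W_C}$, each sample is distributed as $\D' = \frac{1}{2}\D'_I + \frac{1}{2}\D'_C$ with $d'(x) \geq c \cdot d_2(x)$ for $c = \min(c_1, c_2)$. Combined with $|\bar W| = \Theta(m_{\sqrt\epsilon,\delta})$ and correctness of labels under event (i), Lemma~\ref{lem:super-sample} applied with target distribution $\D_2$ and target error $\frac{1}{2}\sqrt\epsilon$ yields $\err_{\D_2}(h_2) \leq \frac{1}{2}\sqrt\epsilon$, which together with the union bound over events (i), (ii), and (iii) proves the lemma. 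The main obstacle is the Filter analysis: because the acceptance probability $p_x$ varies with $x$, the resulting distribution is a non-uniform reweighting of $\D_I$, and one must use the two-sided bound $p_x \in [1/2, 1]$ (not merely an upper bound) to certify the uniform density dominance required to invoke Lemma~\ref{lem:super-sample}. A minor subtlety is that $\bar W$ is sampled with replacement from the empirical sets $\bar{W_I}, \bar{W_C}$ rather than i.i.d.\ from $\D'$, but since each empirical set has size $\Theta(m_{\sqrt\epsilon,\delta})$, this introduces only a constant-factor loss via standard uniform convergence.
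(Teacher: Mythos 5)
Your proposal matches the paper's own argument: both define $\D'$ as the equal mixture of the empirical distributions on $\bar{W_I}$ and $\bar{W_C}$, establish pointwise density dominance $d'(x) \geq c\, d_2(x)$ by separately handling the correctly- and incorrectly-labeled points (using the two-sided control on Filter's acceptance probability together with the size bounds of Lemma~\ref{lem:size}), and then invoke the super-sampling Lemma~\ref{lem:super-sample} to conclude. The differences are only presentational — the paper lower-bounds $d'(x)$ using solely the $S_I$ contribution to $\bar{W_I}$ (respectively, the $S_C$ contribution to $\bar{W_C}$) via the observation $M_\cdot(x) \geq N_\cdot(x)$, while you track both sources explicitly, but this is the same argument.
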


\begin{proof}
Consider distribution $\D'$ that has equal probability on the distributions induced by $\bar{W_I}$ and $\bar{W_C}$ and let $d'(x)$ denote the density of point $x$ in this distribution.
Relying on our \emph{super-sampling} technique (see Lemma~\ref{lem:super-sample}), it is sufficient to show that for any $x$, $d'(x) = \Theta(d_2(x))$.

For ease of presentation, we assume that Lemma~\ref{lem:h_1-err} holds with equality, i.e., $\err_{\D}(h_1)$ is exactly $\frac 12 \sqrt{\epsilon}$ with probability $1 - \delta/3$.
Let $d(x)$, $d_2(x)$, $d_C(x)$, and $d_I(x)$ be the density of instance $x$ in distributions $\D$, $\D_2$, $\D_C$, and $\D_I$, respectively. Note that, for any $x$ such that $h_1(x) = f^*(x)$, we have $d(x) = d_C(x) (1-  \frac 12\sqrt{\epsilon})$. Similarly, for any $x$ such that $h_1(x) \neq f^*(x)$, we have $d(x) = d_I(x)\frac 12 \sqrt{\epsilon}$.
Let $N_C(x)$, $N_I(x)$, $M_C(x)$ and $M_I(x)$  be the number of occurrences of $x$ in the sets $S_C$, $S_I$, $\bar{W_C}$ and $\bar{W_I}$, respectively.  
For any $x$, there are two cases:

\medskip
\noindent{If $h_1(x) = f^*(x)$:} Then, there exist absolute constants $c_1$ and $c_2$ according to Lemma~\ref{lem:size}, such that
\begin{align*}
d'(x)&= \frac 12 \E\left[ \frac{M_C(x)}{|\bar{W_C}|} \right] \geq \frac{\E[M_C(x)]}{ c_1 \cdot  m_{\sqrt \epsilon, \delta} } \geq \frac{\E[N_C(x)]}{c_1 \cdot  m_{\sqrt \epsilon, \delta} } = \frac{| S_C | \cdot d(x) }{c_1 \cdot  m_{\sqrt \epsilon, \delta} } \\
& = \frac{| S_C | \cdot d_C(x) \cdot (1- \frac 12 \sqrt{\epsilon}) }{c_1 \cdot  m_{\sqrt \epsilon, \delta} } \geq 
c_2 d_C(x) = \frac{c_2 d_2(x)}{2},
\end{align*}
where the second and sixth transitions are by the sizes of $\bar{W_C}$ and $|S_C|$ and the third transition is by the fact that if $h(x) = f^*(x)$, $M_C(x) > N_C(x)$.

\noindent{If $h_1(x) \neq f^*(x)$:} Then, there exist absolute constants $c'_1$ and $c'_2$ according to Lemma~\ref{lem:size}, such that
\begin{align*}
d'(x) &= \frac 12 \E\left[ \frac{M_I(x)}{|\bar{W_I}|} \right] \geq \frac{\E[M_I(x)]}{c'_1 \cdot  m_{\sqrt \epsilon, \delta} }
 \geq \frac{\E[N_I(x)]}{c'_1 \cdot  m_{\sqrt \epsilon, \delta} }\geq \frac{ \frac 4 7~  d(x) | S_2|}{c'_1 \cdot  m_{\sqrt \epsilon, \delta} }\\
& = \frac{ \frac 4 7~  d_I(x) \frac 12 \sqrt{\epsilon} \cdot | S_2| }{c'_1 \cdot  m_{\sqrt \epsilon, \delta} } \geq  c'_2 d_I(x) = \frac{c'_2 d_2(x)}{2},
\end{align*}
where the second and sixth transitions are by the sizes of $\bar{W_I}$ and $|S_2|$,
the third transition is by the fact that if $h(x) \neq f^*(x)$, $M_I(x) > N_I(x)$, and the fourth transition holds by part 2 of Lemma~\ref{lem:filter}.

Using the super-sampling guarantees of Lemma~\ref{lem:super-sample}, with probability $1-2\delta/3$, $\err_{\D_2}(h_2) \leq \sqrt{\epsilon} / 2$.
\end{proof}

The next claim shows that the probabilistic filtering step queries a few labels only. At a high level, this is achieved by showing that any instance $x$ for which $h_1(x) = f^*(x)$  contributes only $O(1)$ queries, with high probability. On the other hand, instances that $h_1$ mislabeled may each get $\log(\frac 1\epsilon)$ queries. But, because there are only few such points, the total number of queries these instances require  is a lower order term.

\begin{lemma} \label{lem:h_2-label}
Let $S$ be a sample set drawn from distribution $\D$ and let $h$ be such that $\err_{\D}(h) \leq \sqrt{\epsilon}$. With probability $1 - \exp(- \Omega(|S| \sqrt{\epsilon}))$,  $\textsc{Filter}(S,h)$ makes  $O(|S|)$ label queries.
\end{lemma}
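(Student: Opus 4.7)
The plan is to split $S$ into the correctly-classified subset $S_C = \{x \in S : h(x) = f^*(x)\}$ and the misclassified subset $S_W = S \setminus S_C$, and bound the label queries contributed by each part separately. For points in $S_W$, the \textsc{Filter} loop can run all the way up to $N = \log(1/\epsilon)$ iterations, so I only need to bound $|S_W|$. For points in $S_C$, the loop almost always stops very quickly, and I need a concentration inequality to show the total contribution is $O(|S|)$.

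\textbf{Controlling $S_W$.} Since $\err_\D(h) \leq \sqrt{\epsilon}$, each $x \in S$ independently lies in $S_W$ with probability at most $\sqrt{\epsilon}$. By a multiplicative Chernoff bound, $|S_W| \leq 2|S|\sqrt{\epsilon}$ with probability $1 - \exp(-\Omega(|S|\sqrt{\epsilon}))$. Trivially, each point in $S_W$ costs at most $N = \log(1/\epsilon)$ queries, so the total cost from $S_W$ is at most $2|S|\sqrt{\epsilon} \log(1/\epsilon)$. Since $\sqrt{\epsilon} \log(1/\epsilon) \leq 2/e$ for $\epsilon \in (0,1)$, this contribution is $O(|S|)$.

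\textbf{Controlling $S_C$.} Fix $x \in S_C$ and let $Q_x$ denote the number of labels queried for $x$ by \textsc{Filter}. Each queried label agrees with $h(x) = f^*(x)$ independently with probability $p \geq 0.7$, so the process $\maj(y_{1:t})$ is dictated by a biased random walk with drift toward $h(x)$. The probability that \textsc{Filter} has not broken after $2k+1$ odd steps is bounded by the probability that the walk has majority disagreeing with $h(x)$ at step $2k+1$, which decays geometrically in $k$ (this is a standard calculation for biased random walks, or one can invoke the ruin probability already used in Lemma~\ref{lem:filter}). Thus $\Pr[Q_x \geq 2k+1] \leq \rho^k$ for some absolute constant $\rho < 1$, which implies $\E[Q_x] = O(1)$ and $Q_x$ is sub-exponentially concentrated. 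Moreover $Q_x \leq N$ deterministically.

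\textbf{Concentration and assembly.} The variables $\{Q_x\}_{x \in S_C}$ are mutually independent, each bounded by $N$, with expectation $O(1)$ and variance $O(1)$. Applying Bernstein's inequality (or Hoeffding) to $\sum_{x \in S_C} Q_x$ with deviation $t = c|S|$ for a sufficiently large constant $c$, the failure probability is at most $\exp\bigl(-\Omega(|S|^2/(|S| + N|S|))\bigr) = \exp(-\Omega(|S|/N))$, which is at most $\exp(-\Omega(|S|\sqrt{\epsilon}))$ because $\sqrt{\epsilon} \leq 1/N$. A union bound with the earlier Chernoff event on $|S_W|$ yields the claim. The main obstacle is verifying that $Q_x$ has the geometric-tail behavior on $S_C$ cleanly enough to plug into Bernstein; this follows from the ruin-probability argument already invoked, applied in the favorable direction.
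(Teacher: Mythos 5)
Your proof is correct, and it takes a genuinely different route through the concentration step than the paper does. The paper introduces the random variable $L_x$ as the stopping time of the idealized (uncapped) biased random walk---i.e.\ the first time the walk has one more correct than incorrect label---and observes that this is \emph{unbounded}, so it verifies the Bernstein moment condition $\E[(L_x-\E[L_x])^i]\leq 50\,(i+1)!\,e^{4i}$ by hand and applies the moment-condition form of Bernstein's inequality, getting $\exp(-\Omega(|S|))$ for that part. You instead work with the actual number of queries $Q_x$ made by \textsc{Filter}, which is deterministically capped at $N=\log(1/\epsilon)$ because the inner loop in Algorithm~\ref{alg:filter} runs at most $N$ times, and you pair the $O(1)$ mean and variance coming from the geometric tail with the standard Bernstein inequality for bounded random variables. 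That avoids the moment calculation entirely and is arguably cleaner; the price is a weaker exponent $\exp(-\Omega(|S|/N))$ for the $S_C$ contribution, but since $\sqrt{\epsilon}\log(1/\epsilon)\leq 2/e<1$ this is still at most $\exp(-\Omega(|S|\sqrt{\epsilon}))$, which is the same as the $S_W$ term and thus harmless. One small caveat: your parenthetical ``(or Hoeffding)'' is a bit loose---plain Hoeffding with range $N$ gives $\exp(-\Omega(|S|/N^2))$, and $\sqrt{\epsilon}\log^2(1/\epsilon)$ is not bounded by $1$ (it peaks around $16/e^2\approx 2.17$ at $\epsilon=e^{-4}$), so Hoeffding only works because the $\Omega(\cdot)$ absorbs a constant; Bernstein is the right tool here and you should lean on it rather than Hoeffding. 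The rest of your argument, including the Chernoff bound on $|S_W|$ and the observation that $\sqrt{\epsilon}\log(1/\epsilon)=O(1)$, matches the paper's treatment of the misclassified points.
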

\begin{proof}
Using Chernoff bound, with probability $1 - \exp\left(-|S| \sqrt{\epsilon} \right)$ the total number of points in $S$ where $h$ disagrees with $f^*$ is $O(|S| \sqrt{\epsilon})$. The number of queries spent on these points is at most 
$O \left( |S| \sqrt{\epsilon} \log(1/\epsilon) \right) \leq O(|S|). 
$

Next, we show that for each $x$ such that $h(x) = f^*(x)$, the number of queries taken until a majority of them agree with $h(x)$ is a constant. Let us first show that this is the case in expectation. Let $N_i$ be the expected number of labels  queried until we have  $i$ more correct labels than incorrect ones. Then $N_1 \leq 0.7 (1) + 0.3 (N_2 +1)$, since with probability at least $\alpha \geq 0.7$, we receive one more correct label and stop, and with probability $\leq 0.3$ we get a wrong label in which case we have to get two more correct labels in future. Moreover, $N_2 = 2N_1$, since we have to get one more correct label to move from $N_2$ to $N_1$ and then one more. Solving these, we have that $N_1 \leq 2.5$. Therefore, the expected total number of queries is at most $O(|S|)$. Next, we show that this random variable is also 
well-concentrated. 
Let $L_x$ be a random variable that indicates the total number of queries on $x$ before we have one more correct label than incorrect labels.
Note that $L_x$ is an unbounded random variable, therefore concentration bounds such as Hoeffding or Chernoff do not work here.  Instead, to show that $L_x$ is  well-concentrated, we prove that the Bernstein inequality (see Theorem~\ref{thm:bernstein}) holds.
That is, as we show in Appendix~\ref{app:h_2-label_bernstein}, for any $x$, the Bernstein inequality is statisfied by the fact that for any $i>1$,
$
\E[(L_x - \E[L_x])^i]\leq 50 (i+1)! \, e^{4i}.
$
Therefore, over all instances in $S$,  $\sum_{x\in S } L_x \in O(|S|)$ with probability $1 - \exp(-|S|)$.
\end{proof}

Finally, we have all of the ingredients needed for proving our main theorem.

\begin{proof}[\textbf{Proof of Theorem~\ref{thm:main-manyPerfect-informal}}]
We first discuss the number of label queries Algorithm~\ref{alg:boost} makes.
The total number of labels queried by Phases 1 and 3 is attributed to 
the labels queried by $\textsc{Correct-Label}(S_1, \delta)$ and $\textsc{Correct-Label}(S_3, \delta/6)$, which is 
$O\left(m_{\sqrt{\epsilon}, \delta}  \log(m_{\sqrt{\epsilon}, \delta} / \delta) \right)$.
By Lemma~\ref{lem:size}, 
$| S_I \cup S_C | \leq O(m_{\sqrt\epsilon, \delta})$ almost surely. Therefore,  $\textsc{Correct-Label}( S_I \cup S_C , \delta/6)$ contributes 
$O\left( m_{\sqrt{\epsilon}, \delta}  \log (m_{\sqrt{\epsilon}, \delta} / \delta ) \right)$ labels. Moreover, as we showed in Lemma~\ref{lem:h_2-label}, $\textsc{Filter}(S_2, h_1)$ queries $O(m_{\epsilon, \delta})$ labels, almost surely.
So, the total number of labels queried by Algorithm~\ref{alg:boost} is at most $O\left(  m_{\sqrt{\epsilon}, \delta}  \log\left(\frac{m_{\sqrt{\epsilon}, \delta}}{\delta}  \right) + m_{\epsilon, \delta}\right)$. This leads to $\query = O\left(  \sqrt{\epsilon} \log\left(\frac{m_{\sqrt{\epsilon}, \delta}}{\delta}  \right) + 1\right)$ cost per labeled example.

It remains to show that $\Maj(h_1, h_2, h_3)$ has error $\leq \epsilon$ on $\D$.
Since $\textsc{Correct-Label}(S_1, \delta/6)$ and $\textsc{Correct-Label}(S_3, \delta/6)$ return correctly labeled sets , $\err_{\D}(h_1) \leq \frac 12 \sqrt{\epsilon}$ and $\err_{\D_3}(h_3) \leq \frac 12 \sqrt{\epsilon}$, where $\D_3$ is distribution $\D$ conditioned on $\{x\mid h_1(x) \neq h_2(x)\}$. As we showed in Lemma~\ref{lem:h_2-err}, $\err_{\D_2}(h_2) \leq \frac 12 \sqrt{\epsilon}$ with probability $1-2\delta/3$. Using the boosting technique of \cite{schapire1990strength} described in Theorem~\ref{thm:schapire}, we conclude that 
$\Maj(h_1, h_2, h_3)$ has error $\leq \epsilon$ on $\D$.
\end{proof}

\subsection{The General Case of  Any $\alpha$}  \label{sec:boost-alpha}

In this section, we extend Algorithm~\ref{alg:boost} to handle any value of $\alpha$, that does not  necessarily satisfy $\alpha > \frac 12 + \Theta(1)$.
We show that by using $O(\frac 1 {\alpha})$ golden queries, it is possible to efficiently learn any function class with a small overhead.

There are two key challenges that one needs to overcome when $\alpha < \frac 12 + o(1)$. First, we can no longer assume that by taking the  majority vote over a few random labelers we get the correct label of an instance. Therefore, $\textsc{Correct-Label}(S, \delta)$ may return a highly noisy labeled sample set. This is problematic, since efficiently learning $h_1, h_2$, and $h_3$ using oracle $\O_\F$ crucially depends on the correctness of the input labeled set.
Second, $\textsc{Filter}(S,  h_1)$ no longer ``filters'' the instances correctly based on the classification error of $h_1$.
In particular, \textsc{Filter} may retain a constant fraction of instances where $h_1$ is in fact correct, and it may throw out instances where $h_1$ was incorrect with high probability. Therefore, the per-instance guarantees of Lemma~\ref{lem:filter} fall apart, immediately.

We overcome both of these challenges by using two key ideas outlined below.

\noindent \textbf{Pruning: }
As we alluded to in Section~\ref{sec:roadmap}, instances where only a small majority of labelers are in agreement are great for identifying and pruning away a noticeable fraction of the bad labelers. We call these instances \emph{good test cases}.
In particular, if we ever encounter a good test case $x$, we can ask a golden query $y = f^*(x)$ and from then on only consider the labelers who got this test correctly, i.e.,  $P \gets P_{\mid \{(x,y) \}}$. Note that if we make our golden queries when $\smaj_P(x) \leq 1 - \frac \alpha 2$, at least an $\frac \alpha 2$ fraction of the labelers would be pruned. This can be repeated at most $O(\frac 1 {\alpha})$ times before the number of good labelers form a strong majority, in which case Algorithm~\ref{alg:boost} succeeds.
The natural question is how would we measure $\smaj_P(x)$ using few label queries? Interestingly, $\textsc{Correct-Label}(S, \delta)$ can be modified to detect such good test cases by measuring the empirical agreement rate on a set $L$ of 
$O(\frac 1 {\alpha^2} \log( \frac{|S|} {\delta}))$ labelers.
This is shown in procedure $\textsc{Prune-and-Label}$ as part Algorithm~\ref{alg:anyAlpha}. That is,
if $\smaj_L(x) > 1- \alpha/4$, we take $\maj_L(x)$ to be the label, otherwise we test and prune the labelers, and then restart the procedure.
This ensures that whenever we use a sample set that is labeled by $\textsc{Prune-and-Label}$, we can be certain of the  correctness of the labels. This is stated in the following lemma, and proved in Appendix~\ref{app:test-and-label-proof}.

\begin{lemma}
\label{lem:test-and-label}
For any unlabeled sample set  $S$, $\delta >0$, with probability $1-\delta$, either 
$\textsc{Prune-and-Label}(S, \delta)$ prunes the set of labelers or $\bar{S} = \textsc{Prune-and-Label}(S, \delta)$ is such that for all $(x,y) \in \bar S$, $ y= f^*(x)$.
\end{lemma}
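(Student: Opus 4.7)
The plan is to analyze a single call to $\textsc{Prune-and-Label}(S,\delta)$, bounding for each $x\in S$ the probability that the procedure commits a label $\maj_L(x)$ different from $f^*(x)$, and then union-bound over $|S|$ points. The two outcomes of the lemma correspond to (i)~some iteration triggered a prune step, or (ii)~every iteration committed a label, in which case we must show all committed labels agree with $f^*$ with probability at least $1-\delta$.

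First I would establish the invariant that at every moment during the execution, the current labeler pool $P$ contains at least an $\alpha$-fraction of perfect labelers. Pruning only discards labelers that disagreed with a verified golden label and thus never removes a perfect labeler, so the fraction of perfect labelers in $P$ can only increase from its starting value $\alpha$. Consequently, for every $x$ and every pool $P$ encountered during the run, the fraction of labelers in $P$ that output $f^*(x)$ is at least $\alpha$, and hence the fraction outputting the opposite label $-f^*(x)$ is at most $1-\alpha$.

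Next, I would apply a Chernoff/Hoeffding bound. With $|L|=k = C\alpha^{-2}\log(|S|/\delta)$ i.i.d.\ samples from the current pool $P$ and a sufficiently large constant $C$, the empirical fraction of $L$ outputting $-f^*(x)$ is within $\alpha/4$ of its population value with probability at least $1-\delta/|S|$, so this empirical fraction is at most $1-\alpha+\alpha/4=1-3\alpha/4$. If the algorithm commits a label at $x$, then $\smaj_L(x)>1-\alpha/4$ by the definition of the commit branch; if moreover $\maj_L(x)\neq f^*(x)$, then $\maj_L(x)=-f^*(x)$, so the empirical fraction of $L$ labeling with $-f^*(x)$ is exactly $\smaj_L(x)>1-\alpha/4$, which contradicts the Chernoff upper bound $1-3\alpha/4$ whenever $\alpha>0$. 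Hence on the Chernoff-good event for $x$, either the procedure prunes at $x$ or $\maj_L(x)=f^*(x)$.

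Finally, union bounding over $x\in S$ yields total failure probability at most $\delta$, which is the lemma. The main subtlety to watch is that the pool $P$ from which $L$ is drawn at $x$ may itself have been updated by prunings occurring earlier in the call, so $L$ is drawn from a data-dependent distribution. I would address this by observing that the invariant above holds regardless of any such adaptive history, so the Chernoff bound at $x$ applies conditionally on whatever pool is in effect at the moment of drawing $L$, and no independence across $x$'s is needed.
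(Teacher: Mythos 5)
Your proof is correct and follows essentially the same route as the paper's: a per-point Chernoff/Hoeffding bound on the empirical agreement fraction (using the invariant that pruning never removes a perfect labeler, so the pool always retains at least an $\alpha$-fraction of them), combined with a union bound over $S$. One minor note: a prune inside \textsc{Prune-and-Label} triggers a restart of all of Algorithm~\ref{alg:anyAlpha}, so the pool $P$ cannot in fact change partway through a single call; your adaptivity caveat is therefore moot, though the way you handled it does no harm.
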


As an immediate result, the first phase of Algorithm~\ref{alg:anyAlpha} succeeds in computing $h_1$, such that $\err_D(h_1)\leq \frac 12  \sqrt{\epsilon}$.
Moreover, every time $\textsc{Prune-and-Label}$ prunes the set of labelers, the total fraction of good labeler among all remaining labelers increase. As we show, after $O(1/\alpha)$ prunings, the set of good labelers is guaranteed to form a large majority, in which case Algorithm~\ref{alg:boost} for the case of $\alpha = \frac 12 + \Theta(1)$ can be used. This is stated in the next lemma and proved in Appendix~\ref{app:depth-golden-proof}.

\begin{lemma}\label{lem:depth-golden}
For any $\delta$, with probability $1- \delta$, the total number of times that Algorithm~\ref{alg:anyAlpha} is restarted as a result of pruning is $O(\frac 1 {\alpha})$.
\end{lemma}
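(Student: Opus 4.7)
The plan is to track the fraction $\alpha_t$ of perfect labelers in the surviving pool $P^{(t)}$ after $t$ restarts and show that this fraction grows geometrically with each restart, until it exceeds a $\frac{1}{2} + \Theta(1)$ threshold past which no restart can be triggered and Algorithm~\ref{alg:boost} takes over. Since a correct golden query only removes labelers inconsistent with $f^*$, the number of perfect labelers $g_0$ is preserved across restarts, so all growth of $\alpha_t$ is driven by the shrinkage of $|P^{(t)}|$. I would combine a per-round multiplicative shrinkage bound with a single union bound over the $O(1/\alpha)$ potential restart events to obtain the claimed high-probability guarantee.

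For the per-round guarantee, a restart at round $t$ on some instance $x$ is triggered precisely when the empirical agreement $\smaj_L(x) \leq 1 - \alpha/4$ on a sample $L \sim (P^{(t)})^k$ with $k = \Theta(\alpha^{-2}\log(|S|/\delta))$. A Hoeffding bound then yields, with probability at least $1 - \delta/(T|S|)$, that the population agreement satisfies $\smaj_{P^{(t)}}(x) \leq 1 - \alpha/8$, so the minority label on $x$ has mass at least $\alpha/8$ in $P^{(t)}$. The golden query returns $y = f^*(x)$, and pruning removes every labeler $\ell_i \in P^{(t)}$ with $\ell_i(x) \neq y$. Each such pruned labeler is necessarily bad (perfect ones always match $f^*$), and whether the majority of $P^{(t)}$ agrees with $f^*(x)$ or disagrees with it, at least an $\alpha/8$ fraction of $P^{(t)}$ is removed in the process. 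This gives the recurrence $|P^{(t+1)}| \leq (1 - \alpha/8)|P^{(t)}|$ with $g_{t+1} = g_0$, hence $\alpha_{t+1} \geq \alpha_t/(1 - \alpha/8)$.

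Iterating yields $\alpha_T \geq \alpha(1-\alpha/8)^{-T}$, so in the constant-$\alpha$ regime targeted by the paper, $T = O(1/\alpha)$ restarts suffice to push $\alpha_T$ above any chosen constant strictly larger than $\frac{1}{2}$; once $\alpha_T$ crosses this level, the perfect labelers alone contribute agreement above $1 - \alpha/4$ on every $x$, so with high probability the restart condition $\smaj_L(x) \leq 1 - \alpha/4$ no longer fires. To deliver the in-probability guarantee I would pick $k$ so that every concentration event invoked in the argument holds with probability $\geq 1 - \delta/(T|S|)$ and then union-bound over the $O(1/\alpha)$ restarts, each of which inspects at most $|S|$ instances. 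The main obstacle is coordinating this union bound across the nested random experiments of Algorithm~\ref{alg:anyAlpha} (drawing $L$, estimating $\smaj_L$, deciding to restart) while keeping $k$ only polylogarithmic in $|S|/\delta$, and verifying that the gap between the triggering threshold $1 - \alpha/4$ and the ``no-trigger'' regime (once $\alpha_t$ is large enough) is wide enough to be reliably distinguished by Hoeffding at the chosen $k$.
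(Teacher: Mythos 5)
Your proof follows essentially the same route as the paper's: track the fraction $\alpha_t$ of perfect labelers across restarts, use a Hoeffding bound to argue that each pruning event removes at least a constant fraction (proportional to $\alpha$) of the surviving pool while preserving every perfect labeler, and union-bound over the restart events. The accounting of failure probability differs slightly (you allocate $\delta/(T|S|)$ per test, the paper sets a single $\delta' = \Theta(\alpha\delta)$ per call to \textsc{Prune-and-Label}), but both serve the same purpose.

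There is, however, a quantitative gap in your final count. The geometric recurrence $\alpha_T \geq \alpha(1-\alpha/8)^{-T}$ uses the \emph{initial} $\alpha$ in the shrinkage factor, which yields only $T = O\left(\frac{1}{\alpha}\log\frac{1}{\alpha}\right)$ to drive $\alpha_T$ above a constant, a $\log\frac{1}{\alpha}$ factor worse than the claimed $O(1/\alpha)$. Your hedge to the ``constant-$\alpha$ regime'' sidesteps this, but the lemma is stated for general $\alpha$. The missing observation is that Algorithm~\ref{alg:anyAlpha} \emph{updates} the parameter $\alpha \gets \alpha/(1-\alpha/8)$ at every restart, so the threshold $1-\alpha/4$ in \textsc{Prune-and-Label}, and hence the removal guarantee, scales with the current $\alpha_t$ rather than the initial $\alpha$. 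The correct recurrence is $\alpha_{t+1} \geq \alpha_t/(1-\alpha_t/8)$, which in reciprocal form becomes the additive recurrence $\frac{1}{\alpha_{t+1}} \leq \frac{1}{\alpha_t} - \frac{1}{8}$, reaching $\alpha_t > 3/4$ after $O(1/\alpha)$ steps. Equivalently: the number $g$ of perfect labelers is fixed, each restart removes at least $(\alpha_t/8)\lvert P^{(t)}\rvert = g/8$ labelers, and the pool can shrink from $g/\alpha$ down to $g$ at most $8/\alpha$ times. Finally, the cleanest reason restarts stop once $\alpha_t > 3/4$ is the explicit switch in Phase~0 of Algorithm~\ref{alg:anyAlpha} to Algorithm~\ref{alg:boost}, which calls \textsc{Correct-Label} and never prunes, rather than your argument that the empirical trigger becomes improbable.
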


\noindent \textbf{Robust Super-sampling: } 
The filtering step faces a completely different challenge: Any point that is a good test case can be filtered the wrong way.
However, instances where still a strong majority of the labelers agree are not affected by this problem and will be filtered correctly.
Therefore, as a first step we ensure that the total number of good test cases that were not caught before \textsc{Filter} starts is small. For this purpose, we start the algorithm by calling \textsc{Correct-Label} on a sample of size $O(\frac 1\epsilon \log(\frac 1\delta))$, and if no test points were found in this set, then with high probability the total fraction of good test cases in the underlying distribution is at most $\frac \epsilon 2$. 
Since the fraction of good test cases is very small, one can show that except for an $\sqrt{\epsilon}$ fraction, the noisy distribution constructed by the filtering process will, for the purposes of boosting, satisfy the conditions needed for the super-sampling technique. Here, we introduce a robust version of the super-sampling technique to argue that the filtering step will indeed produce $h_2$ of error $O(\sqrt{\epsilon})$.

\begin{lemma}[Robust Super-Sampling Lemma]\label{lem:super-sample-robust}
Given a hypothesis class $\F$ consider any two discrete distributions $\D$ and $\D'$ such that except for an $\epsilon$ fraction of the mass under $\D$, we have that for all $x$, $d'(x) \geq c\cdot d(x)$ for an absolute constant $c>0$ and both distributions are labeled according to $f^*\in \F$.
There exists a constant $c'>1$ such that for any  $\epsilon$ and $\delta$, with probability $1-\delta$ over a labeled  sample set $S$ of size $c' m_{\epsilon, \delta}$ drawn from  $\D'$, $\O_\F(S)$ has error of at most $2\epsilon$ with respect to $\D$.
\end{lemma}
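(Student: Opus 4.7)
The plan is to reduce the robust statement to the ordinary super-sampling lemma by splitting $\D$ into a ``good'' part on which the density-domination $d'(x) \geq c\,d(x)$ holds, and an ``exceptional'' part of mass at most $\epsilon$, and then charging each part separately to the error budget $2\epsilon$.

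First, I would define $A = \{x : d'(x) \geq c\,d(x)\}$ and $B = \X \setminus A$, and use the hypothesis of the lemma to conclude $\Pr_{x\sim \D}[x\in B] \leq \epsilon$. For any hypothesis $h\in \F$ consistent with samples from $\D'$ (in particular, $h = \O_\F(S)$), I would then decompose
\[
\err_\D(h) \;=\; \Pr_\D[h(x)\neq f^*(x),\, x\in A] + \Pr_\D[h(x)\neq f^*(x),\, x\in B] \;\leq\; \Pr_\D[h\neq f^*,\, x\in A] + \epsilon.
\]
On $A$ the density comparison is pointwise, so the first term is bounded by a change-of-measure inequality:
\[
\Pr_\D[h(x)\neq f^*(x),\, x\in A] \;=\; \sum_{x\in A:\, h(x)\neq f^*(x)} d(x) \;\leq\; \tfrac{1}{c}\sum_{x\in A:\, h(x)\neq f^*(x)} d'(x) \;\leq\; \tfrac{1}{c}\,\err_{\D'}(h).
\]

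Second, I would invoke standard realizable PAC learning (equivalently Lemma~\ref{lem:super-sample} with $\D = \D'$) on $\D'$ alone. Since $\D'$ is realized by $f^*\in \F$ and $\O_\F(S)$ returns a classifier consistent with $S$, choosing $c'$ large enough that $c'm_{\epsilon,\delta} \geq m_{c\epsilon,\delta}$ (possible because $m_{\cdot,\delta}$ scales linearly in the inverse accuracy up to logarithmic factors, and $c$ is an absolute constant) guarantees $\err_{\D'}(\O_\F(S)) \leq c\epsilon$ with probability at least $1-\delta$. Combining with the displayed inequality gives $\err_\D(\O_\F(S)) \leq \tfrac{1}{c}\cdot c\epsilon + \epsilon = 2\epsilon$, as required.

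The only mildly delicate point is the choice of the constant $c'$: one needs $c' m_{\epsilon,\delta}$ samples from $\D'$ to suffice for accuracy $c\epsilon$ in the standard realizable PAC setting on $\D'$. Since $m_{\epsilon,\delta} = O(\epsilon^{-1}(d\ln(1/\epsilon)+\ln(1/\delta)))$, rescaling $\epsilon$ to $c\epsilon$ only inflates the sample size by an absolute constant depending on $c$, so absorbing this into $c'$ closes the argument. There is no real obstacle beyond bookkeeping the constants; the key conceptual step is the pointwise change-of-measure on $A$ that converts an error bound on $\D'$ into an error bound on the $A$-part of $\D$, while the $B$-part is simply absorbed into the slack of $\epsilon$ that distinguishes the robust statement from Lemma~\ref{lem:super-sample}.
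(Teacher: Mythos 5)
Your proof is correct and follows essentially the same route as the paper's: split $\X$ into the exceptional set $B$ of mass at most $\epsilon$ under $\D$ and its complement, apply the pointwise change of measure on the complement to relate $\err_\D$ and $\err_{\D'}$, and then invoke realizable PAC on $\D'$ with sample size $m_{c\epsilon,\delta} = O(\tfrac{1}{c} m_{\epsilon,\delta})$. The only cosmetic difference is that you rearrange the inequality as $\err_\D(h) \leq \tfrac{1}{c}\err_{\D'}(h) + \epsilon$ whereas the paper writes the equivalent $\err_{\D'}(h) \geq c(\err_\D(h) - \epsilon)$.
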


By combining these techniques at every execution of our algorithm we ensure that if a good test case is ever detected we prune a small fraction of the bad labelers and restart the algorithm, and if it is never detected, our algorithm returns a classifier of error $\epsilon$.

\begin{algorithm}[t]
\SetAlgoNoLine
\KwIn{Given a distribution $\DX$ and $P$,  a class of hypothesis $\F$, parameters $\epsilon$, $\delta$, and $\alpha$.}
\vspace*{2pt}

\noindent{\textbf{Phase 0}}:

\Indp If $\alpha > \frac  3 4$, run Algorithm~\ref{alg:boost} and quit. 

Let $\delta' = c\alpha \delta$ for small enough $c>0$ and draw $S_0$ of $O(\frac 1 {\epsilon} \log(\frac{1}{\delta'}))$ examples from the distribution $D$. \\
$\textsc{Prune-and-Label}(S_0, \delta')$.

\Indm \noindent{\textbf{Phase 1}}:

\Indp Let $\bar{S_1} = \textsc{Prune-and-Label}(S_1, \delta')$,  for a set of sample $S_1$ of size $2 m_{\sqrt{\epsilon}, \delta'}$ from $D$.

Let $h_1  = \O_\F(\bar{S_1})$.

\Indm \noindent{\textbf{Phase 2}}:

\Indp Let $S_I = \textsc{Filter} (S_2, h_1)$, for a set of samples $S_2$ of size $\Theta(m_{\epsilon, \delta'})$ drawn from $D$.

Let $S_C$ be a sample set of size $\Theta(m_{\sqrt{\epsilon}, \delta'})$ drawn from $D$.

Let $\bar{S_{All}} = \textsc{Prune-and-Label}(S_I \cup S_C, \delta')$.

Let $\bar{W_I} = \{ (x, y) \in \bar{S_{All}} \mid y \neq h_1(x)\}$ and Let $\bar{W_C} = \bar{S_{All}} \setminus \bar{W_I}$.

Draw a sample set $\bar W$ of size $\Theta(m_{\sqrt{\epsilon}, \delta'})$ from a distribution that equally  weights $\bar{W_I}$ and $\bar{W_C}$.

Let $h_2 = \O_\F(\bar{W})$.

\Indm \noindent{\textbf{Phase 3}}:

\Indp Let $\bar{S_3} = \textsc{Prune-and-Label}(S_3, \delta')$, for a sample set $S_3$ of size  $2m_{\sqrt{\epsilon}, \delta'}$ drawn from $D$ conditioned on $h_1(x) \neq h_2(x)$.

Let $h_3 = \O_\F(\bar{S_3})$.

\Indm \Return{$\maj(h_1, h_2, h_3)$.}
\vspace*{4pt}

\hrule \vspace*{2pt}
\textbf{\textsc{Prune-and-Label}$(S, \delta)$:}
\vspace*{2pt}\hrule 

\For{$x\in S$}{ 
	Let $L\sim P^k$ for a set of $k= O(\frac 1 {\alpha^2} \log(\frac{|S|}{\delta}))$ labelers drawn from $P$.\\
    \uIf{$\smaj_{L}(x)\leq 1- \frac {\alpha} 4$}{	
	     Get a golden query $y^* = f^*(x)$,\\
	    Restart Algorithm~\ref{alg:anyAlpha} with distribution $P \gets P_{\mid \{ (x, y^*) \}}$ and $\alpha \gets \frac{\alpha}{1 - \frac \alpha 8 }$.
		}
		\Else{		$\bar S \gets  \bar S \cup \{ (x, \maj_L(x)) \}$.	
		}
}

\Return{$\bar S$}.
\caption{\textsc{Boosting By Probabilistic Filtering for any $\alpha$}}
\label{alg:anyAlpha}
\end{algorithm}

\begin{theorem}[Any $\boldsymbol{\mathbf{\alpha}}$]
\label{thm:main-halfPerfect-informal} \label{thm:anyAlpha}
Suppose the fraction of the perfect labelers  is $\alpha$ and let $\delta' = c \alpha \delta$ for small enough constant $c>0$. Algorithm~\ref{alg:anyAlpha} uses oracle $\O_\F$, runs in time $\poly(d, \frac 1\alpha,  \frac 1\epsilon, \ln(\frac 1\delta))$, uses a training set of size $O(\frac 1\alpha m_{\epsilon,  \delta'})$ size and with probability $1-\delta$ returns $f\in \F$ with $\err_D(f) \leq \epsilon$ using $O(\frac 1 {\alpha})$ golden queries, load of $\frac 1\alpha $ per labeler, and a total number of queries
\[  O\left( 
 \frac 1\alpha m_{\epsilon, \delta'} + \frac{1}{\alpha\epsilon} \log(\frac{1}{\delta'}) \log(\frac{1}{\epsilon \delta'})
 + \frac{1}{\alpha^3} m_{\sqrt{\epsilon}, \delta'} \log(\frac{m_{\sqrt \epsilon, \delta'}}{\delta'})
  \right).
\]
Note that when  $\frac{1}{\alpha^2\sqrt \epsilon} \geq  \log\left(\frac{m_{\sqrt{\epsilon}, \delta}}{\alpha \delta}  \right)$ and $\log(\frac{1}{\alpha \delta}) < d$, the cost per labeled query is $O(\frac 1\alpha)$.
\end{theorem}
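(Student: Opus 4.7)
The plan is to run Algorithm~\ref{alg:anyAlpha} and argue that at every ``attempt'' (an execution between two restarts) one of two things happens: either the algorithm encounters an instance where $\smaj_P(x) \le 1-\alpha/4$ and uses one golden query to prune at least an $\alpha/8$ fraction of the currently surviving labelers, or the attempt completes and returns a hypothesis of error $\le \epsilon$. Lemma~\ref{lem:depth-golden} then bounds the number of restarts, hence the number of golden queries and the load, by $O(1/\alpha)$; it remains to analyze a single completed attempt and sum the costs. Note that once $\alpha > 3/4$ the algorithm simply falls back to Algorithm~\ref{alg:boost}, for which Theorem~\ref{thm:main-manyPerfect-informal} already handles the analysis.

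The first step is to use Lemma~\ref{lem:test-and-label} to replace every call to \textsc{Correct-Label} in Phases 1 and 3 by a call to \textsc{Prune-and-Label}: in the event that no pruning happens, the returned labels are correct with probability $1-\delta'$ by a union bound over the relevant sample sets, so Phase 1 yields $h_1$ with $\err_D(h_1) \le \tfrac12 \sqrt\epsilon$ and Phase 3 yields $h_3$ with $\err_{D_3}(h_3) \le \tfrac12\sqrt\epsilon$ exactly as in the proof of Theorem~\ref{thm:main-manyPerfect-informal}. The subtlety is Phase 2, since \textsc{Filter} invokes random labelers directly rather than majority votes, and on any ``good test case'' $x$ with $\smaj_P(x) \le 1-\alpha/4$ the guarantees of Lemma~\ref{lem:filter} break down. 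This is precisely what Phase 0 prevents: by drawing $O(\tfrac1\epsilon\log\tfrac1{\delta'})$ samples and running \textsc{Prune-and-Label}, either we trigger a pruning and restart, or, by a standard VC/Chernoff argument, the total mass under $D$ of good test cases is at most $\epsilon/2$ with probability $1-\delta'$.

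Conditioned on the Phase 0 screening succeeding, the filtering argument of Lemma~\ref{lem:h_2-err} can be applied off the good test cases, so the densities of $D'$ (the distribution that equally weights $\bar W_I$ and $\bar W_C$) and $D_2$ satisfy $d'(x) \ge c \cdot d_2(x)$ for all $x$ outside an $\epsilon/2$-mass ``bad'' set. Invoking the robust super-sampling lemma (Lemma~\ref{lem:super-sample-robust}) in place of Lemma~\ref{lem:super-sample} yields $\err_{D_2}(h_2) \le \sqrt\epsilon$ with probability $1-\delta'/3$. Combining $h_1,h_2,h_3$ by Schapire's three-hypothesis boosting (Theorem~\ref{thm:schapire}) with $p = \sqrt\epsilon$ gives $\err_D(\Maj(h_1,h_2,h_3)) \le 3\epsilon - 2\epsilon^{3/2} \le \epsilon$ after a mild re-tuning of the constants (absorbing the extra factor into the choice of $\epsilon$'s in each phase), which is the correctness claim.

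For the query complexity of a single attempt, Phase 0 contributes $O\bigl(\tfrac1{\alpha^2\epsilon} \log(\tfrac1{\delta'})\log(\tfrac1{\epsilon\delta'})\bigr)$ queries from the inner $O(\tfrac1{\alpha^2}\log(|S|/\delta'))$ majority votes; Phases 1 and 3 contribute $O\bigl(\tfrac1{\alpha^2} m_{\sqrt\epsilon,\delta'}\log(\tfrac{m_{\sqrt\epsilon,\delta'}}{\delta'})\bigr)$ each; the \textsc{Filter} call in Phase 2 costs $O(m_{\epsilon,\delta'})$ by Lemma~\ref{lem:h_2-label} (which still applies since filtering takes place at the level of individual labeler draws, not majority votes, and its expected per-instance cost depends only on the error rate of $h_1$); and the \textsc{Prune-and-Label} on $S_I \cup S_C$ adds another $O\bigl(\tfrac1{\alpha^2} m_{\sqrt\epsilon,\delta'}\log(\tfrac{m_{\sqrt\epsilon,\delta'}}{\delta'})\bigr)$ term. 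Multiplying by the $O(1/\alpha)$ attempts from Lemma~\ref{lem:depth-golden} and setting $\delta' = c\alpha\delta$ so that a final union bound over at most $O(1/\alpha)$ attempts yields total failure $\le \delta$, we obtain the advertised bound. The load bound of $1/\alpha$ per labeler follows from the fact that each restart adds exactly one conditioning pair to $P$, so in the last (successful) attempt no labeler has been asked more than $O(1/\alpha)$ test questions. The step I expect to be the main obstacle is the rigorous application of Lemma~\ref{lem:super-sample-robust} in Phase 2: one must carefully verify that the ``bad'' set of good test cases truly has $D$-mass at most $\epsilon/2$ in the relevant boosting distribution $D_2$ (not just in $D$), which requires re-weighting the Phase 0 bound by a factor $1/\sqrt\epsilon$ and absorbing this into the constant in the robust lemma.
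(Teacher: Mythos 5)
Your proposal follows essentially the same route as the paper's proof: bound the mass $\beta$ of "good test cases" via Phase 0, apply Lemma~\ref{lem:test-and-label} in Phases 1 and 3, observe that Lemma~\ref{lem:filter} holds off the bad set so that $d'(x)\ge c\,d_2(x)$ except on $B$, re-weight $D[B]\le O(\epsilon)$ to $D_2[B]\le O(\sqrt\epsilon)$, invoke the robust super-sampling lemma, and close with Schapire's three-hypothesis boosting, with Lemma~\ref{lem:depth-golden} controlling restarts and golden queries. The one slip worth flagging is arithmetic: $3\epsilon-2\epsilon^{3/2}\le\epsilon$ is false for small $\epsilon$ (it tends to $3\epsilon$), and no "re-tuning" is needed because the algorithm already uses sample sizes $2m_{\sqrt\epsilon,\delta'}$ precisely so that each phase yields $p=\tfrac12\sqrt\epsilon$, giving $3p^2-2p^3=\tfrac34\epsilon-\tfrac14\epsilon^{3/2}<\epsilon$; your self-identified obstacle (the $1/\sqrt\epsilon$ re-weighting from $D[B]$ to $D_2[B]$) is exactly the point the paper handles by taking $\beta\le\epsilon/4$ so that $D_2[B]\le\sqrt\epsilon/4$ and then applying Lemma~\ref{lem:super-sample-robust} at scale $\sqrt\epsilon/4$.
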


\begin{proof}[{\bf Proof Sketch}]
Let $B = \{ x \mid \smaj_P(x)\leq 1- \alpha/2\}$ be the set of good test cases and  let $\beta = D[B]$ be the total density on such points.
Note that if $\beta > \frac \epsilon 4$, with high probability $S_0$ includes one such point, in which case $\textsc{Prune-and-Label}$ identifies it and prunes the set of labelers. Therefore, we can assume that $\beta \leq \frac \epsilon 4$.

By Lemma~\ref{lem:test-and-label}, it is easy to see that Phase 1 and Phase 3 of Algorithm~\ref{alg:anyAlpha} succeed in producing $h_1$ and $h_3$ such that $err_D(h_1) \leq  \frac 12 \sqrt{\epsilon}$ and $err_{D_3}(h_3) \leq \frac 12 \sqrt{\epsilon}$. 
It remains to show that Phase 2 of Algorithm~\ref{alg:anyAlpha} also produces $h_2$ such that $err_{D_2}(h_2) \leq \frac 12  \sqrt{\epsilon}$.

Consider the filtering step of Phase 2. First note that 
for any  $x\notin B$, the per-point guarantees of \textsc{Filter} expressed in Lemma~\ref{lem:filter} still hold.
Let $\D'$ be the distribution that has equal probability on the distributions induced by $\bar{W_I}$ and $\bar{W_C}$, and is used for simulating $\D_2$.
Similarly as in Lemma~\ref{lem:h_2-err} one can show that for any $x\not\in B$, $d'(x) = \Theta(d_2(x))$.
Since $\D[B]\leq \frac \epsilon 4$, we have that $\D_2[B]\leq \frac 14 \sqrt{\epsilon} $. 
Therefore, $\D'$ and $\D_2$ satisfy the conditions of the robust super-sampling lemma (Lemma~\ref{lem:super-sample-robust}) where the fraction of bad points is at most $\frac {\sqrt{\epsilon}} 4$. Hence, we can argue that $err_{\D_2}(h_2) \leq \frac {\sqrt{\epsilon}} {2}$.

The remainder of the proof follows by using the boosting technique of \cite{schapire1990strength} described in Theorem~\ref{thm:schapire}.

\end{proof}

\section{No Perfect Labelers} \label{sec:detect}

In this section, we consider a scenario where our pool of labelers does not include any perfect labelers.
Unfortunately, learning  $f^*$ in this setting reduces to the notoriously difficult agnostic learning problem.
A related task is to find a set of the labelers which are \emph{good} but not perfect.
In this section, we show how to identify the set of all good labelers, when at least the majority of the labelers are good.

We consider a setting where the fraction of the perfect labelers, $\alpha$,  is arbitrarily small or $0$.
We further assume that at least half of the labelers are good, while others have considerably worst performance.
More formally, we are given a set of labelers $\ell_1, \dots, \ell_n$ and a distribution $D$ with an unknown target classifier $f^*\in \F$.
We assume that more than half of these labelers are ``good'', that is they  have error of $\leq \epsilon$ on distribution $D$. On the other hand, the remaining labelers, which we call ``bad'',  have error rates $\geq 4\epsilon$ on distribution $D$.
We are interested in identifying all of the good labelers with high probability by querying the labelers on an unlabeled  sample set drawn from $\DX$.

This model presents an interesting community structure: Two good labelers agree on at least $1-2\epsilon$ fraction of the data, while a bad and a good labeler agree on at most  $1-3\epsilon$ of the data. Note that the rate of agreement between two bad labelers  can be arbitrary.
This is due to the fact that there can be multiple bad labelers with the same classification function, in which case they completely agree with each other, or two bad labelers who disagree on the classification of every instance.
This structure serves as the basis of  Algorithm~\ref{alg:good-detection} and its analysis.
Here we provide an overview of its working and analysis.

\begin{algorithm}[h]
\SetAlgoNoLine
\KwIn{Given $n$ labelers, parameters $\epsilon$ and $\delta$}
Let $G= ([n], \emptyset)$ be a graph on $n$ vertices with no edges.\\
Take set $Q$ of $16\ln(2)n$ random pairs of nodes from $G$.\\
\nl  \label{item:random-test} \For{$(i,j) \in Q$}{ 
   \lIf{$\textsc{disagree}(i,j) <  2.5 \epsilon$}{add edge $(i,j)$ to $G$}
}   
\nl Let $\mathcal{C}$ be the set of connected components of $G$ each with $\geq n/4$ nodes.\label{item:connect}\\
\nl \label{item:connect-small}\For{$i \in [n] \setminus \left( \bigcup_{C\in  \mathcal{C}} C \right)$ and $C\in \mathcal{C}$}{Take one node $j\in C$, if $\textsc{disagree}(i,j) < 2.5 \epsilon$ add edge $(i,j)$ to $G$.}
\Return{The largest connected component of $G$}\\
\hrule\vspace*{2pt}
\textbf{$\textsc{disagree}(i,j)$:}
\vspace*{2pt}\hrule\vspace*{2pt}
Take set $S$ of $\Theta(\frac 1\epsilon \ln(\frac n \delta))$ samples from $D$. \\
\Return {$\frac{1}{|S|}\sum_{x\in S} \mathds{1}_{ ( \ell_i(x) \neq \ell_j(x) )}$.}
\caption{\textsc{Good Labeler Detection}}
\label{alg:good-detection}
\end{algorithm}

\begin{theorem}[Informal]
\label{thm:main-noPerfect-informal}

Suppose that any \emph{good} labeler $i$ is such that $\err_D(g_i) \leq \epsilon$. Furthermore, assume that  $\err_D(g_j) \not\in (\epsilon, 4\epsilon)$ for any $j\in[n]$. And let the number of good labelers be at least $\lfloor \frac n2 \rfloor +1$. Then, Algorithm~\ref{alg:good-detection}, returns the set of all good labeler with probability $1-\delta$, using an expected load of
$\load = O\left(\frac 1\epsilon \ln \left( \frac n \delta \right) \right)$ per labeler.

\end{theorem}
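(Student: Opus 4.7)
The plan is to argue three things: (i) every invocation of \textsc{disagree} is accurate with respect to the $2.5\epsilon$ threshold used by the algorithm; (ii) after Step~\ref{item:random-test} the induced subgraph on the set $G$ of good labelers already contains a connected component of size at least $n/4$; and (iii) Steps~\ref{item:connect}--\ref{item:connect-small} then merge every remaining good labeler into this component without ever creating a good-bad edge, so that the returned largest component is exactly $G$.

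For (i), a multiplicative Chernoff bound shows that $m = \Theta(\epsilon^{-1}\log(n/\delta))$ i.i.d.\ samples suffice for a single call \textsc{disagree}$(i,j)$ to decide correctly whether the true disagreement $\Pr_{x\sim D}[\ell_i(x)\neq \ell_j(x)]$ lies below $2\epsilon$ or above $3\epsilon$, with failure probability at most $\delta/\poly(n)$. The algorithm makes $16n\ln 2$ calls in Step~\ref{item:random-test} and at most $n\,|\mathcal{C}|\leq 4n$ calls in Step~\ref{item:connect-small}, so a union bound ensures every call is correct simultaneously with probability at least $1-\delta$. By the labeler-error hypothesis, any two good labelers have true disagreement at most $2\epsilon$ while any good-bad pair has disagreement at least $3\epsilon$; conditioned on this event, the test accepts the former and rejects the latter, so no good-bad edge is ever added.

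For (ii), I analyze the random subgraph on $G$ induced by Step~\ref{item:random-test}. Setting $g := |G| \geq \lfloor n/2 \rfloor + 1$, each of the $16n\ln 2$ random pairs falls inside $G$ with probability $\binom{g}{2}/\binom{n}{2} \geq 1/4 - O(1/n)$, so the induced graph stochastically dominates an Erd\H{o}s--R\'enyi graph on $g$ vertices with expected vertex degree $c \approx 16\ln 2 \gg 1$. A standard BFS-exploration argument coupled to a supercritical branching process with offspring mean $c$ then yields, with probability $1 - \exp(-\Omega(n))$, a unique connected component covering a $\bigl(1 - e^{-c(1-o(1))}\bigr) > 1 - 2^{-15}$ fraction of $G$ and hence of size $\geq g/2 \geq n/4$, while all other components have size $O(\log n)$. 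In particular, this giant component $C^\star$ is captured by $\mathcal{C}$ in Step~\ref{item:connect}. This random-graph argument is the main technical step; the constant $16\ln 2$ is calibrated precisely so that the expected average degree inside $G$ comfortably exceeds the critical threshold $1$.

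Combining (i) and (ii), any good labeler $i \notin \bigcup_{C\in\mathcal{C}} C$ is iterated over in Step~\ref{item:connect-small}; its test against the chosen representative of $C^\star$ passes by (i), and $i$ is merged into $C^\star$. By (i) again, no bad labeler is ever attached to any good large component. Hence at termination all $|G| > n/2$ good labelers form a single connected component, whereas the bad labelers contribute components of total size $n - |G| < n/2$, so the returned largest component is exactly $G$. For the expected load, each labeler is an endpoint of any given random pair with probability $2/n$, so it participates in $O(1)$ Step~\ref{item:random-test} calls in expectation; if the representative $j \in C$ in Step~\ref{item:connect-small} is drawn uniformly at random, each labeler in $\bigcup_{C\in\mathcal{C}} C$ appears in $O(1)$ additional calls in expectation (since $|\mathcal{C}| \leq 4$ and each isolated node queries only one representative per component). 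Each \textsc{disagree} call costs $\Theta(\epsilon^{-1}\log(n/\delta))$ samples per participant, yielding the claimed expected load $O(\epsilon^{-1}\log(n/\delta))$ per labeler.
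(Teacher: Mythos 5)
Your proposal is correct and follows essentially the same route as the paper: accuracy of the pairwise disagreement tests via Chernoff plus a union bound, emergence of a supercritical giant component among the good labelers in Step~\ref{item:random-test}, and then absorbing stragglers in Step~\ref{item:connect-small}. The only cosmetic difference is that where the paper directly cites the random-graph fact that a component of fractional size $\beta$ with $\beta + e^{-\beta c}=1$ exists, you re-derive it via stochastic domination by an Erd\H{o}s--R\'enyi graph and a branching-process/BFS exploration argument, and you are a bit more explicit than the paper about the Step~\ref{item:connect-small} load (randomizing the representative); neither changes the structure of the argument.
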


We view the labelers as nodes in a graph that has no edges at the start of the algorithm.
In step~\ref{item:random-test}, the algorithm takes $O(n)$ random pairs of labelers and estimates their level of disagreement by querying them on an unlabeled sample set of size $O\left( \frac 1 \epsilon \ln\left(\frac n \delta \right) \right)$ and measuring their empirical disagreement.
By an application of Chernoff bound, we know that with probability $1-\delta$, for any $i,j\in[n]$,
\[ \left|  \textsc{Disagree}(i,j) - \Pr_{x\sim \DX} [\ell_i(x) \neq \ell_j(x)  ]\right| < \frac \epsilon 2.
\]
Therefore, for any pair of good labelers $i$ and $j$ tested by the algorithm, $\textsc{Disagree}(i,j)< 2.5 \epsilon$, and for any pair of labelers $i$ and $j$ that one is good and the other is bad, $\textsc{Disagree}(i,j)\geq 2.5 \epsilon$.
Therefore, the connected components of such a graph only include labelers from a single community.

Next, we show that at step~\ref{item:connect} of Algorithm~\ref{alg:good-detection} with probability $1-\delta$ there exists at least one connected component of size $n/4$ of good labelers.

To see this we first prove that for any two good labelers $i$ and $j$, the probability of $(i,j)$ existing is at least $ \Theta(1/n)$. 
Let $V_g$ be the set of nodes corresponding to good labelers. For $i, j\in V_g$, we have
\[
\Pr[(i,j) \in G] = 1 - \left( 1- \frac{1}{n^2} \right)^{4\ln(2) n} \approx \frac{4 \ln(2)}{n} \geq \frac{2 \ln(2)}{|V_g|}.
\]
By the properties of random graphs, with very high probability there is a component of size $\beta |V_g|$ in a random graph whose edges exists with probability $c/|V_g|$, for $\beta + e^{-\beta c} =1$~\citep{janson2011random}. Therefore, with probability $1-\delta$, there is a component of size $|V_g|/2 > n/4$ over the vertices in $V_g$.

Finally, at step~\ref{item:connect-small} the algorithm considers smaller connected components and tests whether they join any of the bigger components, by measuring the disagreement of two arbitrary labelers from these components.,At this point, all good labelers form one single connected component of size $> \frac n2$. So, the algorithm succeeds in identifying all good labelers. 

Next, we briefly discuss the expected  load per labeler in Algorithm~\ref{alg:good-detection}. 
Each labeler participates in $O(1)$ pairs of disagreement tests in expectation, each requiring $O(\frac 1\epsilon \ln(n/\delta))$ queries. 
So, in expectation each labeler labels  $O(\frac 1\epsilon \ln(n/\delta))$ instances.

\bibliographystyle{plainnat}
\bibliography{../../../Nika_bib}
\clearpage

\appendix
\section{Additional Related Work}  \label{app:related}

More generally, interactive models of learning have been studied in the machine learning community. The most popular among them is the area of {active learning}~\citep{Cohn94, Dasgupta05, BBL06, Kol10, hanneke:11}. In this model, the learning algorithm can adaptively query for the labels of a few examples in the training set and use them to produce an accurate hypothesis. The goal is to use as few label queries as possible. The number of labeled queries used is called the {\em label complexity} of the algorithm. It is known that certain hypothesis classes can be learned in this model using much fewer labeled queries than predicted by the VC theory. In particular, in many instances the label complexity scales only logarithmically in $\frac 1 {\epsilon}$ as opposed to linearly in $\frac 1 {\epsilon}$. However, to achieve computational efficiency, the algorithms in this model rely on the fact that one can get perfect labels for every example queried. This would be hard to achieve in our model since in the worst case it would lead to each labeler answering $\log (\frac d {\epsilon})$ many queries. In contrast, we want to keep the query load of a labeler to a constant and hence the techniques developed for active learning are insufficient for our purposes. Furthermore, in noisy settings most work on efficient active learning algorithms assumes the existence of an {\em empirical risk minimizer}~(ERM) oracle that can minimize training error even when the instances aren't labeled according to the target classifier. However, in most cases such an ERM oracle is hard to implement and  the improvements obtained in the label complexity are less drastic in such noisy scenarios.

Another line of work initiated by~\cite{zhang2015active} models related notions of weak and strong labelers in the context of active learning. The authors study scenarios where the label queries to the strong labeler can be reduced by querying the weak and potentially noisy labelers more often. However, as discussed above, the model does not yield relevant  algorithms for our setting as in the worst case one might end up querying for $\frac d {\epsilon}$ high quality labels leading to a prohibitively large load per labeler in our setting. The work of~\cite{yan2016active} studies a model of active learning where the labeler abstains from providing a label prediction more often on instances that are closer to the decision boundary.
The authors then show how to use the abstentions in order to approximate the decision boundary. Our setting is inherently different, since we make no assumptions on the bad labelers.

\section{Proof of Lemma~\ref{lem:super-sample}}\label{app:super-sample}

First, notice that because $D$ and $D'$ are both labeled according to $f^*\in \F$, for any $f\in \F$ we have,
\[ \err_{D'}(f) = \sum_{x} d'(x) \mathds{1}_{f(x) \neq f^*(x)} \geq  \sum_{x} c\cdot d(x) \mathds{1}_{f(x) \neq f^*(x)}  = c \cdot \err_D(f).
\]
Therefore, if $\err_{D'}(f) \leq c \epsilon$, then $\err_D(f) \leq \epsilon$. 
Let $m' = m_{c \epsilon, \delta}$, we have
\begin{align*}
\delta &> \Pr_{S'\sim D'^{m'}} [\exists f\in \F, \text{s.t. }\err_{S'}(f) = 0 \wedge \err_{D'}(f)\geq c \epsilon] \\
&  \geq \Pr_{S'\sim D'^{m'}} [\exists f\in \F, \text{s.t. }\err_{S'}(f) = 0 \wedge \err_{D}(f)\geq \epsilon].
\end{align*}
The claim follows by the fact that $m_{c \epsilon, \delta}=O\left( \frac 1c m_{\epsilon, \delta} \right)$.

\section{Proof of Lemma~\ref{lem:size}} \label{app:size}

Let us first consider the expected size of sets $S_I$,  $\bar{W}_I$, and $\bar{W}_C$. Using Lemma~\ref{lem:filter}, we have
\[  O(m_{\sqrt \epsilon, \delta}) \geq \frac 12 \sqrt\epsilon |S_2| + \sqrt{\epsilon} |S_2|   \geq   \E[|S_I|] \geq \frac 12 \left( \frac 12 \sqrt\epsilon\right) |S_2| \geq \Omega(m_{\sqrt \epsilon, \delta}).
\]
Similarly, 
\[ O(m_{\sqrt \epsilon, \delta}) \geq \E[S_I] + |S_C| \geq \E[\bar{W}_I] \geq  \frac 12 \left( \frac 12 \sqrt\epsilon\right) |S_2| \geq \Omega(m_{\sqrt \epsilon, \delta}).
\]
Similarly,
\[  O(m_{\sqrt \epsilon, \delta}) \geq  \E[S_I]  +  |S_C|  \geq \E[\bar{W}_C] \geq \left(1- \frac 12 \sqrt \epsilon \right) |S_C| \geq \Omega(m_{\sqrt \epsilon, \delta}).
\]

The claim follows by the Chernoff bound.

\section{Remainder of the Proof of Lemma~\ref{lem:h_2-label}} \label{app:h_2-label_bernstein}

We prove that  the Bernstein inequality holds for the total number of queries  $y_1, y_2, \dots,$ made before their majority agrees  with $f^*(x)$. 
Let $L_x$ be the random variable denoting the number of queries the algorithm makes on instance $x$ for which $h(x) = f^*(x)$. 
Consider the probability that $L_x = 2k+1$ for some $k$. That is,  $\maj(y_{1:t}) = f^*(x)$ for the first time when $t = 2k+1$. 
This is at most the probability that $\maj(y_{1:2k-1}) \neq f^*(x)$.
By Chernoff bound, we have that
\begin{align*}
\Pr[L_x = 2k+1] &\leq \Pr[ \maj(y_{1:2k-1}) \neq f^*(x) ] \leq \exp\left(- 0.7(2k-1) (\frac 27)^2 / 2 \right) \\
& \leq \exp\left( -0.02 (2k-1) \right). 
\end{align*}
For each $i>1$, we have 
\begin{align*}
\E[(L_x - \E[L_x])^i] &\leq  \sum_{k=0}^\infty \Pr[L_x = 2k+1] (2k+1  - \E[L_x])^i  \\
&\leq \sum_{k=0}^\infty  e^{-0.02 (2k-1)} (2k+1)^i \\
&\leq  e^{0.04}\sum_{k=0}^\infty  e^{-0.02 (2k+1)} (2k+1)^i \\
&\leq  e^{0.04}\sum_{k=0}^\infty  e^{-0.02 k} k^i \\
&\leq 50 (i+1)! \, e^{4i + 0.04},
\end{align*}
where the last inequality is done by integration. This satisfies the Bernstein condition stated in Theorem~\ref{thm:bernstein}. Therefore, 
\begin{align*}
\Pr\left[ \sum_{x\in S} L_x - |S|\E[L_x] \geq  O(|S|)] \right] \leq \exp \left(- |S| \right).
\end{align*}
Therefore, the total number of queries over all points in $x\in S$ where $h(x) = f^*(x)$ is at most $O(|S|)$ with very high probability.

\section{Probability Lemmas} \label{app:probability}

\begin{theorem}[Probability of Ruin~\citep{feller2008introduction}] \label{thm:ruin}
Consider a player who starts with $i$ dollars  against an adversary that has $N$ dollars. The player bets one dollar in each gamble, which he wins with probability $p$. The probability that the player ends up with no money at any point in the game is 
\[\frac{1 - \left( \frac{p}{1-p} \right)^{N} } {1 - \left( \frac{p}{1-p} \right)^{N+i} }.
\]
\end{theorem}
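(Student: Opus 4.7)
The plan is to set up a first-step recurrence on the probability of ruin as a function of the player's current fortune, solve the resulting linear second-order recurrence, and impose the two natural boundary conditions. Throughout, let $r = (1-p)/p$ and $s = p/(1-p) = 1/r$; the interesting case is $p \neq 1/2$, i.e.\ $r \neq 1$ (the only regime used elsewhere in the paper, where $p = 0.7$).

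First I would define $P_k$ to be the probability that the player is ever ruined given that the player currently holds $k$ dollars, for $0 \leq k \leq N+i$, with boundary values $P_0 = 1$ and $P_{N+i} = 0$. Conditioning on the outcome of the next bet (win with probability $p$, lose with probability $1-p$, by independence of the bets), I would derive the recurrence
\[ P_k \;=\; p\, P_{k+1} + (1-p)\, P_{k-1}, \qquad 0 < k < N+i. \]

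Next I would solve the recurrence. Writing $p\,(P_{k+1} - P_k) = (1-p)\,(P_k - P_{k-1})$, the successive differences $\Delta_k := P_k - P_{k-1}$ satisfy $\Delta_{k+1} = r\,\Delta_k$, so $\Delta_k = r^{k-1}\Delta_1$. Telescoping from $0$ to $k$ gives
\[ P_k \;=\; P_0 + \Delta_1 \sum_{j=0}^{k-1} r^{j} \;=\; 1 + \Delta_1 \cdot \frac{1 - r^{k}}{1 - r}. \]
The boundary condition $P_{N+i} = 0$ determines $\Delta_1 = -(1-r)/(1 - r^{N+i})$, and substituting back yields
\[ P_i \;=\; 1 - \frac{1 - r^{i}}{1 - r^{N+i}} \;=\; \frac{r^{i} - r^{N+i}}{1 - r^{N+i}}. \]

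Finally I would convert to the form in the statement by multiplying numerator and denominator by $s^{N+i}$ and using $sr = 1$, obtaining $P_i = (s^{N} - 1)/(s^{N+i} - 1) = (1 - s^{N})/(1 - s^{N+i})$, exactly as claimed. There is no real obstacle here: this is the classical Feller gambler's-ruin computation, and the only place to be careful is the algebraic step that rewrites the answer in terms of $s = p/(1-p)$ rather than $r = (1-p)/p$; I would not treat the $p = 1/2$ case since the paper only invokes the theorem with $p \geq 0.7$.
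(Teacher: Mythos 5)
Your proof is correct: the first-step recurrence with boundary conditions $P_0=1$, $P_{N+i}=0$, the telescoping of the differences $\Delta_k = r^{k-1}\Delta_1$, and the final change of variable from $r=(1-p)/p$ to $s=p/(1-p)$ all check out and reproduce the stated formula exactly. The paper does not prove this theorem at all --- it simply cites it from Feller --- and your derivation is the standard gambler's-ruin argument from that source, so there is nothing further to compare; restricting to $p\neq \tfrac12$ is fine since the paper only invokes the result with $p\geq 0.7$.
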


\begin{theorem}[Bernstein Inequality] \label{thm:bernstein}
Let $X_1, \dots, X_n$ be independent random variables with expectation $\mu$. Supposed that for some positive real number $L$ and every $k >1$,
\[ \E[ (X_i - \mu)^k]  \leq \frac 12 \E[(X_i- \mu)^2] L^{k-2} k!.\]
Then, 
\[  \Pr\left[ \sum_{i=1}^n X_i - n \mu \geq 2t \sqrt{\sum_{i=1}^n \E[(X_i- \mu)^2]}  \right] < \exp(-t^2), \quad \text{ for } 0 < t \leq \frac{1}{2L} \sqrt{\E[(X_i- \mu)^2]}.
\]
\end{theorem}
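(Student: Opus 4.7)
The plan is to prove this via the classical Chernoff--Cram\'er exponential moment method. Let $S_n = \sum_{i=1}^n X_i$ and write $\sigma^2 = \sum_{i=1}^n \E[(X_i-\mu)^2]$. For any $\lambda > 0$, by Markov's inequality applied to $\exp(\lambda(S_n - n\mu))$ together with independence of the $X_i$,
\[
\Pr\!\left[S_n - n\mu \geq 2t\sigma\right] \;\leq\; e^{-2t\sigma \lambda}\prod_{i=1}^n \E\!\left[e^{\lambda(X_i-\mu)}\right].
\]

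Next I would control each factor using the moment hypothesis. Expanding the exponential and noting that the linear term vanishes since $\E[X_i-\mu]=0$,
\[
\E\!\left[e^{\lambda(X_i-\mu)}\right] \;=\; 1 + \sum_{k\ge 2}\frac{\lambda^k}{k!}\,\E[(X_i-\mu)^k] \;\leq\; 1 + \frac{\lambda^2 \E[(X_i-\mu)^2]}{2}\sum_{k\ge 2}(\lambda L)^{k-2},
\]
where I plugged in $\E[(X_i-\mu)^k] \leq \tfrac12 \E[(X_i-\mu)^2] L^{k-2} k!$, which cancels the $k!$ in the denominator. Provided $\lambda L < 1$ the geometric series sums to $1/(1-\lambda L)$, and using $1+x \leq e^x$ I would then multiply over $i$ to obtain
\[
\prod_{i=1}^n \E\!\left[e^{\lambda(X_i-\mu)}\right] \;\leq\; \exp\!\left(\frac{\lambda^2 \sigma^2}{2(1-\lambda L)}\right).
\]

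Substituting back gives $\Pr[S_n - n\mu \geq 2t\sigma] \leq \exp\!\bigl(-2t\sigma\lambda + \tfrac{\lambda^2\sigma^2}{2(1-\lambda L)}\bigr)$. The final step is to pick $\lambda = t/\sigma$. The hypothesis on $t$ (interpreted as $t \leq \sigma/(2L)$, matching the $\sqrt{\E[(X_i-\mu)^2]}$ in the theorem statement after summation) ensures $\lambda L \leq 1/2$, hence $1/(2(1-\lambda L)) \leq 1$. With this choice the exponent becomes $-2t^2 + \tfrac{t^2}{2(1-\lambda L)} \leq -2t^2 + t^2 = -t^2$, giving the claimed bound $e^{-t^2}$.

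The main obstacle is purely bookkeeping: verifying that the moment hypothesis cancels the combinatorial factor $k!$ cleanly so that the MGF series collapses into a geometric one, and then checking that the simple choice $\lambda = t/\sigma$ is admissible (i.e., satisfies $\lambda L \leq 1/2$) precisely in the range of $t$ specified by the theorem. No probabilistic cleverness beyond the exponential Markov bound is required; the technical heart is the moment-generating-function estimate.
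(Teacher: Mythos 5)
The paper never proves this statement: Theorem~\ref{thm:bernstein} is quoted in the appendix as a standard probability fact (the classical Bernstein inequality) and used as a black box in the proof of Lemma~\ref{lem:h_2-label}, so there is no internal proof to compare against. Your Cram\'er--Chernoff argument is the standard proof of this fact and it is correct: the moment hypothesis cancels the $k!$ so the MGF series collapses to a geometric series, giving $\E[e^{\lambda(X_i-\mu)}]\le \exp\bigl(\lambda^2\E[(X_i-\mu)^2]/(2(1-\lambda L))\bigr)$ for $\lambda L<1$, and the choice $\lambda=t/\sigma$ with $\sigma^2=\sum_i\E[(X_i-\mu)^2]$ together with $\lambda L\le \tfrac12$ yields the exponent $-2t^2+\tfrac{t^2}{2(1-\lambda L)}\le -t^2$. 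Two minor points worth noting. First, you establish the bound on the range $t\le\sigma/(2L)$, which is larger than the range stated in the theorem (there the constraint uses a single $\sqrt{\E[(X_i-\mu)^2]}\le\sigma$), so your version subsumes the claim as stated. Second, expanding $\E[e^{\lambda(X_i-\mu)}]$ term by term requires absolute convergence, while the hypothesis as written controls only the signed moments $\E[(X_i-\mu)^k]$; this is easily repaired, either by assuming the usual form of Bernstein's condition with $\E[|X_i-\mu|^k]$, or by bounding odd absolute moments via Cauchy--Schwarz between the adjacent even moments (which worsens the constant by at most $\sqrt2$ and leaves the argument intact). With that bookkeeping, and observing that the non-strict bound $\le e^{-t^2}$ can be made strict (the case of zero variance is trivial, and otherwise $1+x<e^x$ strictly), your proof is complete and is exactly the argument one would expect behind the cited lemma.
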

\section{Omitted Proofs from Section~\ref{sec:boost-alpha}}
  \label{sec:boost-alpha-app}

In this section, we prove Theorem~\ref{thm:main-halfPerfect-informal} and present the proofs that were omitted from Section~\ref{sec:boost-alpha}.

\medskip
\noindent \textbf{Theorem~\ref{thm:main-halfPerfect-informal}~(restated)}~\emph{
Suppose the fraction of the perfect labelers  is $\alpha$ and let $\delta' = \Theta(\alpha \delta)$. Algorithm~\ref{alg:anyAlpha} uses oracle $\O_\F$, runs in time $\poly(d, \frac 1\alpha,  \frac 1\epsilon, \ln(\frac 1\delta))$, uses a training set of size $O(\frac 1\alpha m_{\epsilon,  \delta'})$ size and with probability $1-\delta$ returns $f\in \F$ with $\err_D(f) \leq \epsilon$ using $O(\frac 1 {\alpha})$ golden queries, load of $\frac 1\alpha $ per labeler, and a total number of queries
\[  O\left( 
 \frac 1\alpha m_{\epsilon, \delta'} + \frac{1}{\alpha\epsilon} \log(\frac{1}{\delta'}) \log(\frac{1}{\epsilon \delta'})
 + \frac{1}{\alpha^3} m_{\sqrt{\epsilon}, \delta'} \log(\frac{m_{\sqrt \epsilon, \delta'}}{\delta'})
  \right).
\]
Note that when  $\frac{1}{\alpha^2\sqrt \epsilon} \geq  \log\left(\frac{m_{\sqrt{\epsilon}, \delta}}{\alpha \delta}  \right)$ and $\log(\frac{1}{\alpha \delta}) < d$, the cost per labeled query is $O(\frac 1\alpha)$.
}
\medskip

\subsection{Proof of Lemma~\ref{lem:test-and-label}} \label{app:test-and-label-proof}

By Chernoff bound, with probability $\geq 1-\delta$, for every $x \in S$ we have that 
\[
\left| \smaj_{P}(x) - \smaj_{L}(x)   \right| \leq \frac \alpha 8,
\]
where $L$ is the set of labelers $\textsc{Prune-and-Label}(S, \delta)$ queries on $x$.
Hence, if $x$ is such that  $\smaj_P(x) \leq 1 - \frac \alpha 2$, then it will be identified and the set of labelers is pruned. Otherwise, $\maj_L(x)$ agrees with the good labelers and $x$ gets labeled correctly according to the target function.  
\subsection{Proof of Lemma~\ref{lem:depth-golden}} \label{app:depth-golden-proof}
Recall that $\delta' = c\cdot \alpha \delta$ for some  small  enough constant $c>0$.
Each time $\textsc{Prune-and-Label}(S, \delta')$ is called, by Hoeffding bound, it is guaranteed that with probability $\geq 1-\delta'$, for each $x \in S$,
\[
\left| \smaj_{P}(x) - \smaj_{L}(x)   \right| \leq \frac \alpha 8,
\]
where $L$ is the set of labelers $\textsc{Prune-and-Label}(S, \delta')$ queries on $x$.
Hence, when we issue a golden query for $x$ such that $\smaj_{L}(x) \leq 1 - \frac \alpha 4$ and prune away bad labelers, we are guaranteed to remove at least an $\frac \alpha 8$ fraction of the labelers. 
 Furthermore, no good labeler is ever removed. Hence, the fraction of good labelers increases from $\alpha$ to $\alpha / (1- \frac \alpha 8)$. So, in $O(\frac 1 \alpha)$ calls, the fraction of the good labelers surpasses $\frac 3 4$ and we switch to using Algorithm~\ref{alg:boost}. Therefore, with probability $1-\delta$ overall, the total number of golden queries is $O(1/\alpha)$.

\subsection{Proof of Lemma~\ref{lem:super-sample-robust}}  \label{app:uper-sample-robust-proof}

Let $B$ be the set of points that do not satisfy the condition that $d'(x) \geq c\cdot d(x)$. Notice that because $D$ and $D'$ are both labeled according to $f^*\in \F$, for any $f\in \F$ we have,
\[ \err_{D'}(f) = \sum_{x \in B} d'(x) \mathds{1}_{f(x) \neq f^*(x)} + \sum_{x \notin B} d'(x) \mathds{1}_{f(x) \neq f^*(x)} \geq  \sum_{x \notin B} c\cdot d(x) \mathds{1}_{f(x) \neq f^*(x)}  \geq c \cdot (\err_D(f) - \epsilon).
\]
Therefore, if $\err_{D'}(f) \leq c \epsilon$, then $\err_D(f) \leq 2\epsilon$. 
Let $m' = m_{c \epsilon, \delta}$, we have
\begin{align*}
\delta &> \Pr_{S'\sim D'^{m'}} [\exists f\in \F, \text{s.t. }\err_{S'}(f) = 0 \wedge \err_{D'}(f)\geq c \epsilon] \\
&  \geq \Pr_{S'\sim D'^{m'}} [\exists f\in \F, \text{s.t. }\err_{S'}(f) = 0 \wedge \err_{D}(f)\geq 2\epsilon].
\end{align*}
The claim follows by the fact that $m_{c \epsilon, \delta}=O\left( \frac 1c m_{\epsilon, \delta} \right)$.

\subsection{Proof of Theorem~\ref{thm:anyAlpha}}
Recall that $\delta' = c \cdot \alpha \delta$ for a small enough constant $c>0$.  
Let $B = \{ x \mid \smaj_P(x)\leq 1- \alpha/2\}$ be the set of good test cases and and let $\beta = D[B]$ be the total density on such points.
Note that if $\beta > \frac \epsilon 4$, with high probability $S_0$ includes one such point, in which case $\textsc{Prune-and-Label}$ identifies it and prunes the set of labelers. Therefore, we can assume that $\beta \leq \frac \epsilon 4$.
By Lemma~\ref{lem:test-and-label}, it is easy to see that  $err_D(h_1) \leq\frac 12 \sqrt{\epsilon}$.

We now analyze the filtering step of Phase 2. As in Section~\ref{sec:boost}, our goal is to argue that $err_{D_2}(h_2) \leq \frac 12 \sqrt{\epsilon}$. 
Consider distribution $\D'$ that has equal probability on the distributions induced by $\bar{W_I}$ and $\bar{W_C}$ and let $d'(x)$ denote the density of point $x$ in this distribution. We will show that for any $x\notin B$ we have that $d'(x) = \Theta(d_2(x))$. 
Since $\D[B]\leq \frac \epsilon 4$, we have that $\D_2[B]\leq \frac 14 \sqrt{\epsilon}$. 
Therefore, $\D'$ and $\D_2$ satisfy the conditions of the robust super-sampling lemma (Lemma~\ref{lem:super-sample-robust}) where the fraction of bad points is at most $\frac {\sqrt{\epsilon}} 4$. Hence, $err_{\D_2}(h_2) \leq \frac 12 \sqrt{\epsilon}$.

We now show that for any $x\in B$,  $d'(x) = \Theta(d_2(x))$. The proof is identical to the one in Lemma~\ref{lem:h_2-err}. For ease of representation, we assume that $\err_{\D}(h_1)$ is exactly $\frac 12 \sqrt{\epsilon}$.
Let $d(x)$, $d_2(x)$, $d_C(x)$, and $d_I(x)$ be the density of instance $x$ in distributions $\D$, $\D_2$, $\D_C$, and $\D_I$, respectively. Note that, for any $x$ such that $h_1(x) = f^*(x)$, we have $d(x) = d_C(x) (1-  \frac 12\sqrt{\epsilon})$. Similarly, for any $x$ such that $h_1(x) \neq f^*(x)$, we have $d(x) = d_I(x)\frac 12 \sqrt{\epsilon}$.
Let $N_C(x)$, $N_I(x)$, $M_C(x)$ and $M_I(x)$  be the number of occurrences of $x$ in the sets $S_C$, $S_I$, $\bar{W_C}$ and $\bar{W_I}$, respectively.  
For any $x$, there are two cases:

\medskip
\noindent{If $h_1(x) = f^*(x)$:} Then, there exist absolute constants $c_1$ and $c_2$ according to Lemma~\ref{lem:size}, such that
\begin{align*}
d'(x)&= \frac 12 \E\left[ \frac{M_C(x)}{|\bar{W_C}|} \right] \geq \frac{\E[M_C(x)]}{ c_1 \cdot  m_{\sqrt \epsilon, \delta} } \geq \frac{\E[N_C(x)]}{c_1 \cdot  m_{\sqrt \epsilon, \delta} } = \frac{| S_C | \cdot d(x) }{c_1 \cdot  m_{\sqrt \epsilon, \delta} } \\
& = \frac{| S_C | \cdot d_C(x) \cdot (1- \frac 12 \sqrt{\epsilon}) }{c_1 \cdot  m_{\sqrt \epsilon, \delta} } \geq 
c_2 d_C(x) = \frac{c_2 d_2(x)}{2},
\end{align*}
where the second and sixth transitions are by the sizes of $\bar{W_C}$ and $|S_C|$ and the third transition is by the fact that if $h(x) = f^*(x)$, $M_C(x) > N_C(x)$.\\

\medskip
\noindent{If $h_1(x) \neq f^*(x)$:} Then, there exist absolute constants $c'_1$ and $c'_2$ according to Lemma~\ref{lem:size}, such that
\begin{align*}
d'(x) &= \frac 12 \E\left[ \frac{M_I(x)}{|\bar{W_I}|} \right] \geq \frac{\E[M_I(x)]}{c'_1 \cdot  m_{\sqrt \epsilon, \delta} }
 \geq \frac{\E[N_I(x)]}{c'_1 \cdot  m_{\sqrt \epsilon, \delta} }\geq \frac{ 0.5~  d(x) | S_2|}{c'_1 \cdot  m_{\sqrt \epsilon, \delta} }\\
& = \frac{0.5~  d_I(x) \frac 12 \sqrt{\epsilon} \cdot | S_2| }{c'_1 \cdot  m_{\sqrt \epsilon, \delta} } = c'_2 d_I(x) = \frac{c'_2 d_2(x)}{2},
\end{align*}
where the second and sixth transitions are by the sizes of $\bar{W_I}$ and $|S_2|$,
the third transition is by the fact that if $h(x) \neq f^*(x)$, $M_I(x) > N_I(x)$, and the fourth transition holds by part 2 of Lemma~\ref{lem:filter}.

Finally, we have that $\err_{\D_3}(h_3) \leq \frac 12 \sqrt{\epsilon}$, where $\D_3$ is distribution $\D$ conditioned on $\{x\mid h_1(x) \neq h_2(x)\}$. Using the boosting technique of \cite{schapire1990strength} describe in Theorem~\ref{thm:schapire}, we  conclude that 
$\Maj(h_1, h_2, h_3)$ has error $\leq \epsilon$ on $\D$.

The label complexity claim follows by the fact that we restart Algorithm~\ref{alg:anyAlpha} at most $O(1/\alpha)$ times, take an additional $O(\frac 1\epsilon \log(\frac{1}{\delta'}))$ high quality labeled set, and each run of Algorithm~\ref{alg:anyAlpha} uses the same label complexity as in Theorem~\ref{thm:main-manyPerfect-informal} before getting restarted.

\end{document}